\documentclass[letterpaper]{article} 
\usepackage{aaai2027}  
\usepackage[hyphens]{url}  
\usepackage{graphicx} 
\usepackage{natbib}  
\usepackage{caption} 
\usepackage{amsmath,amssymb,amsfonts,amsthm}
\newtheorem{theorem}{Theorem}

\usepackage{subcaption}

\usepackage{array}
\usepackage{booktabs}
\usepackage{multirow}

\usepackage{algorithm}
\usepackage{algpseudocode}

\usepackage[T1]{fontenc}
\usepackage{textcomp}

\usepackage{verbatim}
\usepackage{nicefrac}
\usepackage{microtype}
\usepackage{xcolor}
\newcommand{\good}[1]{{\color{green!50!black}$\downarrow$#1}}
\newcommand{\bad}[1]{{\color{red!70!black}$\uparrow$#1}}

\title{DiBS: Diffusion-Informed Branch Selection}
\author{
    Bo Liu\textsuperscript{\rm 1},
    Yuan Xie\textsuperscript{\rm 2},
    Yuan Gao\textsuperscript{\rm 2},
    Xiaolong Luo\textsuperscript{\rm 3},
    Peng Ye\textsuperscript{\rm 4,\rm 5},
    Tao Chen\textsuperscript{\rm 6},
    Fujun Han\textsuperscript{\rm 2}\thanks{Corresponding Author}
}
\affiliations{
    \textsuperscript{\rm 1}Jilin University
    \textsuperscript{\rm 2}The Chinese University of Hong Kong, Shenzhen
    \textsuperscript{\rm 3}Southwest University \\
    \textsuperscript{\rm 4}MM Lab, The Chinese University of Hong Kong
    \textsuperscript{\rm 5}Shanghai AI Laboratory
    \textsuperscript{\rm 6}Fudan University\\
    liubo1022@mails.jlu.edu.cn, hanfujun@cuhk.edu.cn
}

\begin{document}

\maketitle

\begin{abstract}
Sudoku is a representative constraint satisfaction problem that requires global structural reasoning under strict discrete constraints. The existing works of solving Sudoku mainly focus on two dominant approaches, \textit{i.e.}, traditional heuristic and deep learning solver. However, they suffer from two complementary limitations: learning-based solvers lack hard correctness guarantees, while complete symbolic solvers are still prone to long-tail search. To address these shortcomings, we propose a novel diffusion model-guided approach, termed as \textbf{DiBS}, for the branch selection search process. Specifically, DiBS keeps the symbolic solver complete and uses the diffusion model as a branch-ordering guide. The core method is ranking candidate values under the current partial assignment and lightweight consistency signal. Furthermore, we provide an in-depth theoretical proof to reveal how it works and why it works. Experiments on the challenging Royle 17-clue Sudoku benchmark show that our DiBS substantially reduces search cost relative to strong heuristic baselines, especially in nodes, backtracks, and long-tail percentiles. Besides, these results confirm that learned global guidance is effective on hard instances where branch-order mistakes are most expensive. All codes are available at \url{https://github.com/shanxierdan/DiBS}.
\end{abstract}

\section{Introduction}

Sudoku is a constraint satisfaction problem (CSP) where classical solvers rely on depth-first backtracking with constraint propagation (CP) and heuristics like minimum remaining values (MRV). These solvers are complete, meaning they will find a solution if one exists. However, on hard instances, solver performance is determined by two interacting factors: the strength of constraint propagation and the structure of the search tree. A few early wrong branching decisions can lead to enormous failing subtrees that only reveal contradictions deep in search, producing long-tail runtimes \cite{mackworth1977consistency, haralick1980increasing, regin1994filtering, simonis2005sudoku}.

\begin{figure}[t]
\centering
\includegraphics[width=0.95\linewidth]{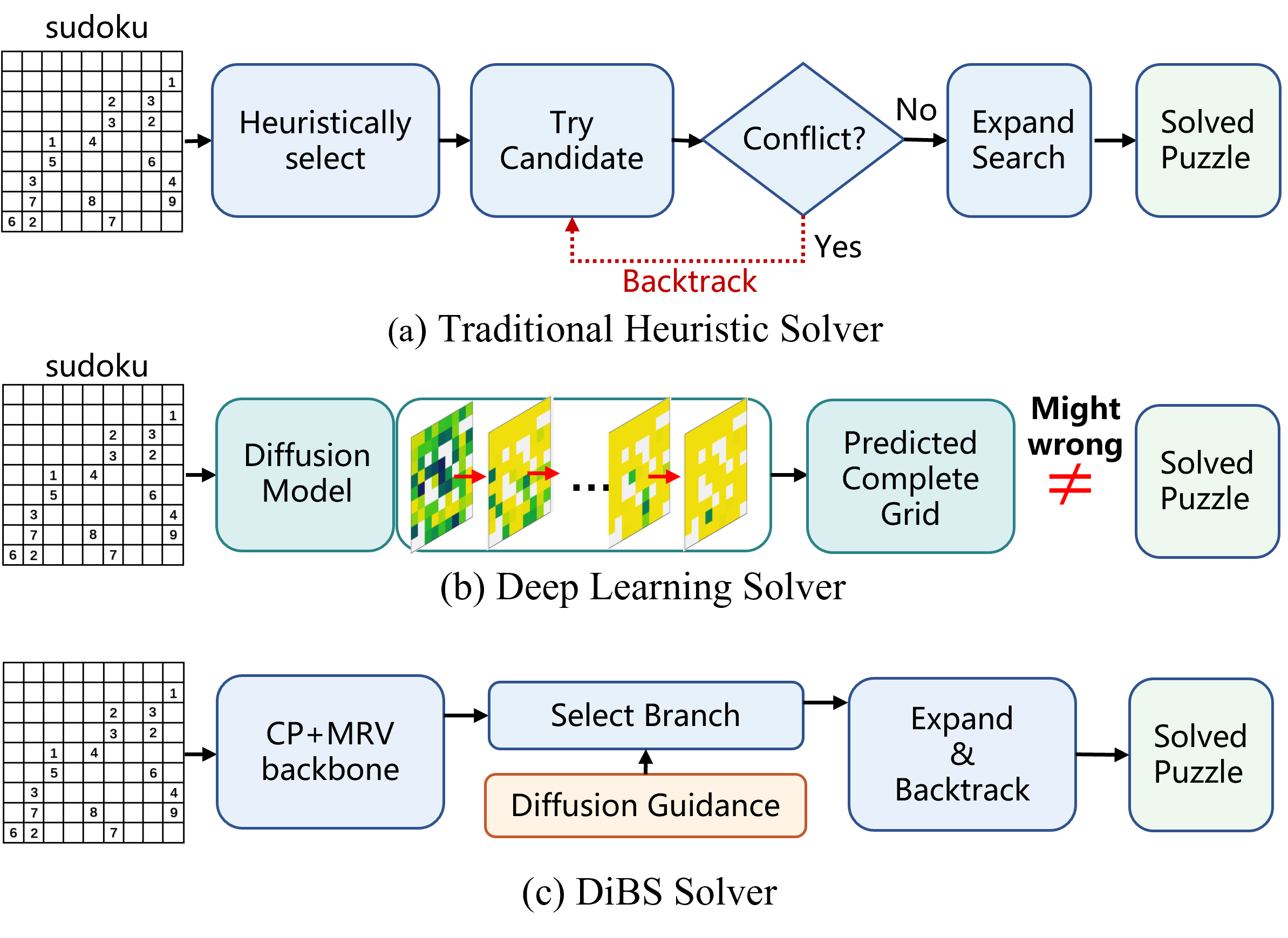} 
\caption{The comparison of three different solver paradigms. We can find that our DiBS is a new paradigm compared with Traditional heuristic solvers and other Deep Learning solvers.}
\label{fig:solver_methods}
\end{figure}

Recently, learned models such as recurrent relational networks and differentiable solvers have been explored to capture Sudoku's global structure, but they adopt end-to-end generation and bypass the exact search mechanisms of classical solvers \cite{palm2018recurrent, wang2019satnet}. This raises a key question: \textit{Can generative models guide exact search to improve efficiency without sacrificing completeness?} Discrete diffusion models \cite{ho2020denoising, austin2021structured, weilbach2023graphically} offer a promising tool, as they encode global compatibility of candidate assignments even when not exposed by local propagation alone. However, existing diffusion-based combinatorial solvers focus on full-solution generation rather than providing guidance during branch selection \cite{sun2023difusco, ye2025beyond}.

Different from these generation-oriented methods, we argue that the primary value of learned models in exact solvers lies not in directly producing solutions, but in guiding branch ordering at high-leverage decision points within a complete search procedure. The intuition is that if a learned model can assess which candidate value is more likely to lead to a solution, trying that branch first avoids exploring the potentially enormous failing subtree of the wrong choice. This makes learned guidance most impactful on hard instances, where wrong branches can remain locally consistent for many steps before contradiction and thus incur disproportionately high search cost---precisely the long-tail regime where classical heuristics struggle most.

In this paper, we address both challenges by using a diffusion model not to generate solutions, but to guide branch ordering at high-leverage decision points within a complete CP+MRV solver. Unlike learned branching \cite{khalil2016learning, gasse2019exact} or model-based planning \cite{schrittwieser2020muzero, janner2022diffuser}, we only reorder candidate values without pruning or learning a full policy, thereby preserving completeness. The key insight is that if $p_s$ is the probability of selecting the solution-leading branch first at state $s$ and $T(s_w)$ is the cost of exploring a wrong subtree, then better guidance yields savings proportional to $\Delta p_s \cdot T(s_w)$---making branch ordering most impactful on hard instances where wrong subtrees are expensive. Based on this principle, we propose \textbf{DiBS} (Diffusion-Informed Branch Selection). DiBS uses a discrete diffusion model to score candidate values under the current partial assignment, combined with a lightweight peer-consistency term, and invokes the model only at binary MRV states to limit overhead. We provide a theoretical framework proving why better branch ordering reduces search cost and showing how DiBS approaches the optimal policy. Experiments on Royle's 17-clue benchmark demonstrate substantial reductions in search cost, especially in long-tail percentiles.

In total, our contributions can be summarized as follows:
\begin{itemize}
    \item \textbf{Novel Paradigm.} DiBS is the first to apply diffusion models for branch selection in a complete CSP algorithm, stronger than heuristic baselines in search cost. We further demonstrate its generality on Boolean satisfiability.
    \item \textbf{Theoretical Framework.} We formalize a probabilistic branch-ordering framework, proving that better ordering saves cost in proportion to wrong-subtree expense, and characterizing the optimal policy that DiBS approximates.
    \item \textbf{Comprehensive Experiments.} Extensive evaluation on Royle's 17-clue benchmark and 3-SAT validates DiBS, with ablation studies examining scoring components, call strategy, puzzle difficulty, and denoising behavior.
\end{itemize}

\section{Related Work}

\subsection{From Generative Modeling to Diffusion-Based Structural Knowledge}

Generative modeling has evolved from VAEs and GANs \cite{kingma2013autoencoding,goodfellow2014gan} through Transformers \cite{vaswani2017attention} to diffusion models, which have become a leading paradigm for learning high-dimensional structure. Starting from DDPMs, subsequent variants such as DDIM, score-based SDEs, latent diffusion, and DiT \cite{ho2020denoising,song2020ddim,song2021sde,rombach2022ldm,peebles2023dit} established diffusion as a general framework for conditional generation and controllable inference. Beyond image synthesis, generative models increasingly support planning and decision-making: world models learn latent dynamics for policy learning \cite{ha2018worldmodels,hafner2023dreamerv3}, diffusion-based planning reframes denoising as trajectory optimization \cite{janner2022diffuser}, and vision-language-action models capture action-relevant regularities \cite{bruce2024genie,brohan2023rt2}. These works motivate a key perspective: the value of a generative model may lie in the structured prior knowledge it provides to another algorithmic system, rather than solely in its samples.

\subsection{Generative Models for Sudoku, Reasoning, and Planning}

Discrete diffusion models such as D3PM \cite{austin2021structured} and continuous-time or masked variants \cite{campbell2022ctdd,kim2024tokenorder,garg2025learnedorder} extend diffusion to categorical state spaces, which is especially relevant for Sudoku where the state is discrete, globally constrained, and partially observed. Several works apply generative models to Sudoku-like reasoning: GSDM encodes sparse variable interactions for conditional inference \cite{weilbach2023graphically}, Beyond Autoregression reports strong results using discrete diffusion for reasoning and planning \cite{ye2025beyond}, and adaptive token-ordering studies explicitly evaluate Sudoku reasoning \cite{kim2024tokenorder,wewer2025srm}. Despite these advances, most methods treat Sudoku as a generation problem: the model directly produces the full solution. Our approach differs fundamentally---we use the diffusion model as a conditional prior to rank candidate values inside a complete symbolic solver, guiding exact reasoning rather than bypassing it.

\subsection{Constraint Satisfaction, Exact Solving, and Learned Search Guidance}

Sudoku is a classical CSP whose exact solution methods are rooted in constraint programming. Foundational notions of consistency, propagation, and search control \cite{mackworth1977consistency,haralick1980increasing,freuder1982sufficient} were strengthened by arc consistency and AllDifferent filtering \cite{mohr1993ac6,regin1994filtering}, now standard in CSP textbooks \cite{apt2003principles,dechter2003constraint,rossi2006handbook,russell2010aima}. CSP techniques extend broadly to scheduling, verification, and configuration \cite{bartak2010survey,gotlieb2012tcas,benavides2010featuremodels}, so methods preserving completeness while improving search efficiency have wide applicability. For learned search guidance, recurrent relational networks and differentiable satisfiability layers capture Sudoku's global constraints \cite{palm2018recurrent,wang2019satnet}, but aim at direct solution prediction rather than improving complete search. Learned branching policies for branch-and-bound \cite{khalil2016learning,gasse2019exact,scavuzzo2022treemdp} demonstrate that learned models can reduce search effort without replacing the symbolic backbone. DiBS follows this philosophy but differs in mechanism: it employs a diffusion model for value ordering at high-leverage branching states within CP-style search, bridging generative modeling and exact CSP solving.

\section{Preliminaries}
\label{sec:preliminaries}

\subsection{Sudoku and Constraint Satisfaction Problems}
We view Sudoku as a special case of a constraint satisfaction problem (CSP). A CSP is specified by a triple
\[
\mathcal{P}=(V,\mathcal{D},\mathcal{C}),
\]
where $V$ is a finite set of variables, $\mathcal{D}=\{D_v\}_{v\in V}$ assigns each variable $v$ a finite domain $D_v$, and $\mathcal{C}$ is a set of constraints over subsets of variables.

In standard $9\times 9$ Sudoku, we take $V=\{1,\dots,81\}$, one variable per cell. Each variable $v\in V$ takes a value in $\{1,\dots,9\}$, so initially $D_v\subseteq\{1,\dots,9\}$, with the given clues encoded as singleton domains. We represent a partial grid by a partial assignment
\[
x:V\rightarrow \{0,1,\dots,9\},
\]
where $x(v)=0$ means that cell $v$ is unassigned, and $x(v)\in\{1,\dots,9\}$ is an assigned digit.

Sudoku constraints enforce that within every row, column, and $3\times 3$ block, all digits are distinct. For convenience we use pairwise inequality constraints. Let $N(v)\subseteq V$ denote the set of neighbors of $v$, namely all variables that share a row, a column, or a block with $v$ (excluding $v$ itself). Let
\[
A(x)=\{v\in V:\ x(v)\neq 0\}
\]
denote the set of assigned variables under a (partial) assignment $x$. Then feasibility of $x$ means that no two assigned neighboring cells share the same digit:
\[
x(u)\neq x(v)\qquad \forall\, v\in A(x),\ \forall\, u\in N(v)\cap A(x).
\]
A complete solution is an assignment $x^\star$ with $A(x^\star)=V$ that satisfies the constraints above.

\subsection{Discrete Diffusion for Sudoku Reasoning}

Discrete diffusion models \cite{ye2025beyond} represent a complete Sudoku solution as a token sequence $x_0 \in \{1,\dots,9\}^{81}$ and construct corrupted versions $x_t$ through an absorbing-state noising process over timesteps $t \in \{1,\dots,T\}$. The model learns a reverse denoising distribution $p_\theta(x_{t-1}\mid x_t)$ by minimizing a weighted cross-entropy loss over corrupted tokens:
\begin{equation}
	\mathcal{L}_{\mathrm{DM}}
	=
	\mathbb{E}_{q(x_0)}
	\sum_{n=1}^{N}\sum_{t=1}^{T}
	w(t)\,
	\mathbb{E}_{q(x_t\mid x_0)}
	\,u(x_0,x_t,n;\theta),
	\label{eq:dm_loss}
\end{equation}
where $N$ is the sequence length, $w(t)$ is a timestep weight, and $u(x_0,x_t,n;\theta)= -\mathbf{1}[x_{t,n}\neq x_{0,n}]\, \mathbf{e}(x_{0,n})^{\top}\log f_\theta(x_t)_n$ is the token-level cross-entropy, with $\mathbf{e}(\cdot)$ denoting one-hot encoding. This objective trains the model to predict missing digits from partially observed global contexts, which suits Sudoku since cell validity depends on long-range row, column, and block interactions.

\subsection{CP and MRV}

A standard complete Sudoku solver uses constraint propagation (CP), depth-first backtracking, and Minimum Remaining Values (MRV) \cite{freuder1982sufficient}. CP maintains candidate domains $\{D_v\}_{v\in V}$ and removes inconsistent values after each assignment. If any domain becomes empty, the solver backtracks immediately. When propagation alone cannot finish the puzzle, MRV selects the variable with the smallest domain:
\begin{equation}
	v^\star \in \arg\min_{v:\,x(v)=0} |D_v|.
	\label{eq:mrv}
\end{equation}
This follows the fail-first principle: more constrained variables yield more informative branching points. The solver tries values in $D_{v^\star}$ sequentially, invoking CP recursively. Since all candidates are eventually explored if needed, changing only the value order affects efficiency but never correctness.

\section{Method}
\label{sec:method}

\begin{figure*}[t]
\centering
\includegraphics[width=0.98\textwidth]{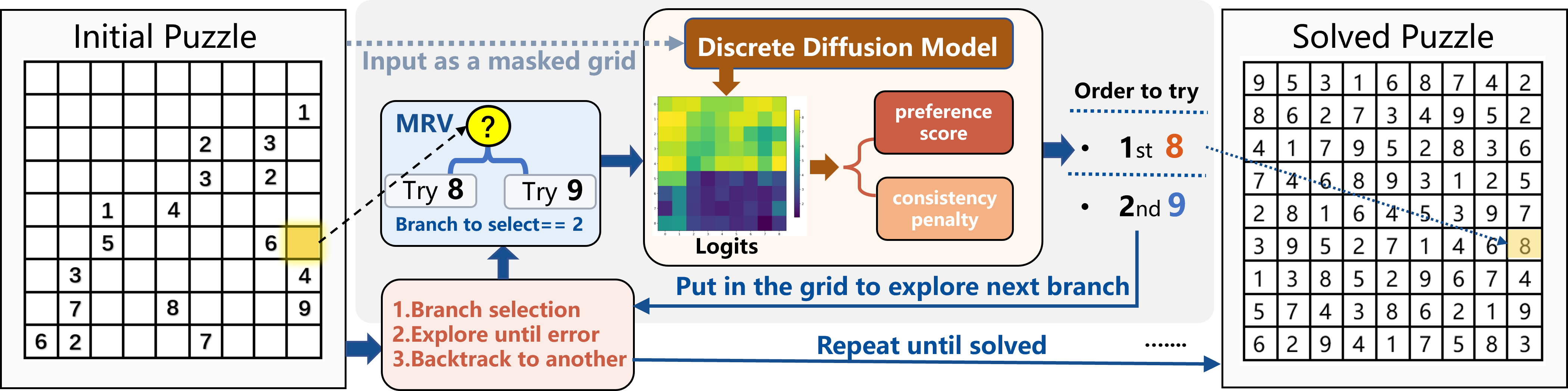}
\caption{Overall pipeline of the proposed \textbf{DiBS}. Starting from an incomplete Sudoku grid, the CP+MRV solver selects the next branching variable. At binary MRV states, the current partial assignment is represented as a masked grid and passed to a discrete diffusion model, which provides conditional preference scores for the candidate values. These scores are combined with a consistency penalty to determine the branch ordering. The solver then explores the preferred branch first, backtracks only when a contradiction is encountered, and repeats this procedure until the puzzle is solved.}
\label{fig:overview}
\end{figure*}

\subsection{Motivation: Diffusion as Probabilistic Branch Guidance}
Complete backtracking Sudoku solvers spend most of their time exploring \emph{failing} subtrees caused by early wrong decisions. On very hard instances (e.g., 17-clue puzzles), an incorrect branch often does not contradict immediately; instead, it fails only after deep propagation and many recursive calls. Therefore, even without pruning any candidate, improving the order in which branches are tried can substantially reduce the total search cost and tail latency
(see Supplementary Material for a representative search-behavior comparison).

Diffusion models trained for Sudoku completion provide a probabilistic signal about which digit is more plausible for a given cell under a partial grid. Rather than using the model to directly generate a full solution, we use it as a branch preference oracle inside a symbolic, complete solver. To keep inference overhead amortized, we only query the model at a small subset of high-leverage decision points.

\subsection{Principle: How DiBS Works for Branch Ordering}
As shown in Fig.~\ref{fig:overview}, DiBS augments a standard complete solver with constraint propagation, MRV variable selection, and depth-first backtracking. The key modification is that, at each branching decision, DiBS uses a trained diffusion model to order the candidate digits for the selected cell---it never prunes any candidate, thus preserving completeness.

Let $V$ be the set of $81$ cells. A solver state is denoted by $s=(x,\{D_v\}_{v\in V})$, where $x:V\to\{0,1,\dots,9\}$ is the current partial assignment ($x(v)=0$ means unassigned) and $D_v\subseteq\{1,\dots,9\}$ is the current candidate domain after constraint propagation. At state $s$, MRV selects the branching cell
\begin{equation}
v^\star(s)\in \arg\min_{v:\,x(v)=0} |D_v|.
\label{eq:mrv2}
\end{equation}

Given the current assignment $x$, we first construct a conditioning grid $\widetilde{C}(s)\in\{0,1,\dots,9\}^{81}$:
\begin{equation}
\widetilde{C}(s)_i=
\begin{cases}
x(i), & \text{if } x(i)\neq 0,\\
0,    & \text{if } x(i)=0,
\end{cases}
\label{eq:cond_grid}
\end{equation}
where $0$ denotes an unassigned token.

Let $\mathcal{M}_\theta$ denote the trained discrete diffusion model with parameters $\theta$. We query the model once on the conditioning grid to obtain logits for each cell-digit pair:
\begin{equation}
p_\theta(d \mid v,\widetilde{C}) = \frac{\exp(\ell_v[d])}{\sum_{d'\in D_v}\exp(\ell_v[d'])},
\label{eq:local_prob}
\end{equation}
where $\ell_v[d]$ is the logit output by the model for placing digit $d$ in cell $v$. We use the logarithm of these probabilities as the diffusion preference score:
\begin{equation}
S_\theta(v,d\mid \widetilde{C}) = \log p_\theta(d \mid v,\widetilde{C}).
\label{eq:pref_def}
\end{equation}

To encourage globally extensible choices, we add a lightweight consistency score based on the model's predictions at peer cells (cells sharing the same row, column, or $3\times3$ box). Let $\mathcal{N}(v)$ be the set of peers of cell $v$, and let $q_\theta(u,d\mid\widetilde{C})$ be the model's marginal probability of digit $d$ at peer cell $u$. The consistency penalty is
\begin{equation}
\mathrm{Cons}_\theta(v,d\mid \widetilde{C})
:=
-\sum_{u\in \mathcal{N}(v)} q_\theta(u,d\mid \widetilde{C}),
\label{eq:cons_def}
\end{equation}
which decreases when many peers are also predicted to take digit $d$.

At state $s$, for each candidate $d\in D_{v^\star(s)}$, DiBS computes a combined score
\begin{align}
\mathrm{Score}_\theta(s,d)
&= \alpha\, S_\theta\!\big(v^\star(s),d \mid \widetilde{C}(s)\big) \notag \\
&\quad + (1-\alpha)\,\mathrm{Cons}_\theta\!\big(v^\star(s),d \mid \widetilde{C}(s)\big),
\label{eq:final_score}
\end{align}
where $\alpha\in[0,1]$ is a fixed weight. DiBS orders candidates $d^{(1)},\dots,d^{(m)}$ by decreasing score and explores them in that order. All candidates are eventually explored if needed, preserving completeness.

Model inference incurs nontrivial overhead, so DiBS is invoked only at carefully selected branching states. We define a binary trigger:
\begin{equation}
\mathrm{Call}(s)=\mathbf{1}\{|D_{v^\star(s)}| = 2\},
\label{eq:trigger}
\end{equation}
i.e., the model is called only when the MRV cell has exactly two candidates. This concentrates model calls on high-leverage binary decisions and limits inference overhead. The complete algorithm is provided in Supplemental Material.

\subsection{Theory: Why DiBS Yields Large Savings}
Consider a fixed solver state $s=(x,\{D_v\}_{v\in V})$ after sound propagation, with binary branching domain $D_{v^\star(s)}=\{d_1,d_2\}$. Let $s_i=\mathrm{Child}(s,d_i)$ be the child state, and $T(s_i)$ be the cost of fully exploring its subtree. Assume the fixed instance is satisfiable from $s$ and exactly one child is solution-leading; denote it by $s_g$, and the failing child by $s_w$. Let $\pi$ be a branch-ordering policy at $s$, and $p_s(\pi)=\mathbb{P}(\pi \text{ explores } s_g \text{ before } s_w)$. Let $C_\pi(s)$ denote the total cost from state $s$ until a solution is found. Expectations are conditional on the fixed state and instance.

Let $\Pi_s$ denote the class of all branch-ordering policies at $s$ that keep the same symbolic backbone and differ only in candidate ordering. Define
\begin{equation}
p_s^\star := \sup_{\pi\in\Pi_s} p_s(\pi),
\label{eq:ps_star_def}
\end{equation}
and let $\pi_s^\star\in\Pi_s$ denote any policy attaining this supremum. For DiBS, let $\pi_\theta\in\Pi_s$ be the policy induced by Eq.~\eqref{eq:final_score}, and $\delta_s(\theta) := p_s^\star - p_s(\pi_\theta)\ge 0$.

\begin{theorem}[Expected savings under binary branch ordering]
\label{thm:binary_cost}
\begin{equation}
\mathbb{E}[C_\pi(s)]
=
\mathbb{E}[T(s_g)]
+
\bigl(1-p_s(\pi)\bigr)\,\mathbb{E}[T(s_w)].
\label{eq:binary_decomp}
\end{equation}
Moreover, for any two policies $\pi_a$ and $\pi_b$,
\begin{equation}
\mathbb{E}[C_{\pi_a}(s)]-\mathbb{E}[C_{\pi_b}(s)]
=
\bigl(p_s(\pi_b)-p_s(\pi_a)\bigr)\,\mathbb{E}[T(s_w)].
\label{eq:binary_gap}
\end{equation}
\end{theorem}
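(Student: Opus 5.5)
The plan is to condition on the policy's random ordering decision at $s$ and split into the two possible exploration orders. Let $G$ be the event that $\pi$ explores $s_g$ before $s_w$, so $\mathbb{P}(G)=p_s(\pi)$.

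\textbf{Event $G$.} The solver descends into $s_g$. Because $s_g$ is solution-leading and the symbolic backbone below $s$ is the same for every policy in $\Pi_s$, the cost of reaching a solution inside it is $T(s_g)$. We interpret $T(s_g)$ as the cost incurred in the solution-leading subtree until the solution is returned; this is the one modelling convention to fix explicitly. The solver then terminates without ever touching $s_w$.

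\textbf{Event $G^c$.} The solver first enters $s_w$. Since $s_w$ is failing and the search is complete and depth-first, it must exhaust that whole subtree, paying $T(s_w)$, before backtracking to $s$ and entering $s_g$, where it again pays $T(s_g)$.

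Combining the two cases gives the pathwise identity
\[
C_\pi(s)=T(s_g)+\mathbf{1}[G^c]\,T(s_w).
\]
Taking expectations gives \eqref{eq:binary_decomp}, provided we can factor
\[
\mathbb{E}\bigl[\mathbf{1}[G^c]\,T(s_w)\bigr]=\bigl(1-p_s(\pi)\bigr)\,\mathbb{E}[T(s_w)].
\]
This factorization is the main obstacle. It needs the ordering randomness at $s$ to be independent of the subtree costs. I would justify it as follows: expectations are conditional on the fixed state and instance, so $T(s_w)$ and $T(s_g)$ are determined by the fixed backbone below $s$. They are either deterministic or driven by randomness independent of the choice made at $s$, since that choice only permutes which child is visited first. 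If needed, I would state this independence as an explicit standing assumption. The cost of the node $s$ itself and of the model call can be absorbed into $T(s_g)$, or shown to be common to all policies in $\Pi_s$.

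For \eqref{eq:binary_gap}, apply \eqref{eq:binary_decomp} to $\pi_a$ and $\pi_b$ and subtract. The term $\mathbb{E}[T(s_g)]$ is policy-independent because both policies share the backbone below $s$, and so is $\mathbb{E}[T(s_w)]$. The difference is therefore
\[
\bigl[(1-p_s(\pi_a))-(1-p_s(\pi_b))\bigr]\,\mathbb{E}[T(s_w)]=\bigl(p_s(\pi_b)-p_s(\pi_a)\bigr)\,\mathbb{E}[T(s_w)].
\]
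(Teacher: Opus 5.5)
Your proof is correct and follows the paper's argument essentially line for line. Both establish the pathwise identity $C_\pi(s)=T(s_g)+\mathbf{1}[G^c]\,T(s_w)$ by splitting on the ordering event, take expectations, and subtract the two decompositions for $\pi_a$ and $\pi_b$. Your explicit independence assumption between the ordering choice at $s$ and the subtree costs, together with your note that $\mathbb{E}[T(s_g)]$ and $\mathbb{E}[T(s_w)]$ are policy-independent, spells out what the paper uses tacitly when it invokes a ``standard conditioning identity.''
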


\begin{theorem}[Optimal probabilistic guidance and DiBS gap]
\label{thm:optimal_guidance}
For any satisfiable binary branching state $s$ with $\mathbb{E}[T(s_w)]>0$, there exists an optimal policy $\pi_s^\star\in\Pi_s$ such that
\begin{equation}
\arg\min_{\pi\in\Pi_s}\mathbb{E}[C_\pi(s)]
=
\arg\max_{\pi\in\Pi_s} p_s(\pi).
\label{eq:optimal_equiv}
\end{equation}
Moreover, for DiBS,
\begin{equation}
\mathbb{E}[C_{\pi_\theta}(s)]-\mathbb{E}[C_{\pi_s^\star}(s)]
=
\delta_s(\theta)\,\mathbb{E}[T(s_w)].
\label{eq:oracle_gap_dibs}
\end{equation}
\end{theorem}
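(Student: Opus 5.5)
The plan is to derive Theorem~\ref{thm:optimal_guidance} directly from Theorem~\ref{thm:binary_cost}. The key fact is that, once the state $s$ and the instance are fixed, the quantity $\mathbb{E}[T(s_w)]$ does not depend on the policy. The subtrees below $s_g$ and $s_w$ are explored by the same symbolic backbone, and policies in $\Pi_s$ differ only in the order in which the two children are visited. Hence Eq.~\eqref{eq:binary_decomp} writes $\mathbb{E}[C_\pi(s)]$ as a policy-independent constant $\mathbb{E}[T(s_g)]+\mathbb{E}[T(s_w)]$ minus $p_s(\pi)\,\mathbb{E}[T(s_w)]$. That is, $\mathbb{E}[C_\pi(s)]$ is a strictly decreasing affine function of $p_s(\pi)$ whenever $\mathbb{E}[T(s_w)]>0$.

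\textbf{Step 1: existence of the optimum.} For a binary state, $p_s(\pi)\in[0,1]$. The deterministic policy that always places the $s_g$-branch first lies in $\Pi_s$, because it only reorders and never prunes, and it attains $p_s=1$. So the supremum $p_s^\star$ in Eq.~\eqref{eq:ps_star_def} equals $1$ and is attained. This gives the existence of $\pi_s^\star$. If one insists on policies that cannot see which child is solution-leading, I would instead argue that $\{p_s(\pi)\}$ is the image of randomizations over the two orderings, a compact interval, so the supremum is still attained.

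\textbf{Step 2: the argmin/argmax identity \eqref{eq:optimal_equiv}.} Applying the strictly decreasing affine map $p\mapsto c-p\,\mathbb{E}[T(s_w)]$ preserves and reverses order. So $\pi$ minimizes $\mathbb{E}[C_\pi(s)]$ if and only if it maximizes $p_s(\pi)$. Formally, take $\pi_a=\pi$ and $\pi_b=\pi'$ in Eq.~\eqref{eq:binary_gap}. Then $\mathbb{E}[C_\pi]\le\mathbb{E}[C_{\pi'}]$ holds iff $p_s(\pi)\ge p_s(\pi')$, using $\mathbb{E}[T(s_w)]>0$ to divide. Checking this against all $\pi'$ gives set equality of the argmin and the argmax.

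\textbf{Step 3: the DiBS gap \eqref{eq:oracle_gap_dibs}.} Apply Eq.~\eqref{eq:binary_gap} with $\pi_a=\pi_\theta$ and $\pi_b=\pi_s^\star$. This yields $(p_s^\star-p_s(\pi_\theta))\,\mathbb{E}[T(s_w)]=\delta_s(\theta)\,\mathbb{E}[T(s_w)]$.

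\textbf{Main obstacle.} The delicate point is justifying that $\mathbb{E}[T(s_g)]$ and $\mathbb{E}[T(s_w)]$ are the same for every $\pi\in\Pi_s$. Otherwise, Eq.~\eqref{eq:binary_gap} as stated in Theorem~\ref{thm:binary_cost} would not cleanly isolate $p_s$. I would handle this by reading $\Pi_s$ as varying the ordering only at the root decision $s$, with the subtree costs defined under a common fixed continuation. This is exactly the setting in which Theorem~\ref{thm:binary_cost} is stated, so invoking it resolves the issue.
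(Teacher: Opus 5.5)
Your proposal is correct and follows the paper's proof. In both, Theorem~1 (with policy-independent subtree costs at the fixed state) makes $\mathbb{E}[C_\pi(s)]$ a strictly decreasing affine function of $p_s(\pi)$, which gives the argmin/argmax equality, and the gap follows by comparing $\pi_\theta$ with $\pi_s^\star$. Your Step~1 additionally justifies that the supremum $p_s^\star$ is attained, whereas the paper simply assumes a maximizer exists; this is a small but welcome tightening.
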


Theorem~\ref{thm:binary_cost} shows that the value of branch guidance is controlled by the leverage term $\bigl(p_s(\pi_b)-p_s(\pi_a)\bigr)\,\mathbb{E}[T(s_w)]$: improving the probability of trying the solution-leading child first is most beneficial when the wrong subtree is expensive. On hard Sudoku instances, wrong branches at such states often remain locally consistent for many steps before contradiction, so $\mathbb{E}[T(s_w)]$ can be large and heavy-tailed. Theorem~\ref{thm:optimal_guidance} further shows that, within our algorithmic setting, the optimal policy is exactly the one maximizing $p_s(\pi)$. The gap between DiBS and this oracle is $\delta_s(\theta)\,\mathbb{E}[T(s_w)]$, so DiBS approaches optimality by reducing $\delta_s(\theta)$ on precisely those high-leverage binary states. Proofs are deferred to Supplemental Material.

\begin{table*}[t]
\centering
\caption{Comparison of DiBS with representative exact backtracking solvers on the Royle 17-clue benchmark. DiBS reports absolute values only. Other methods report absolute values together with relative change against DiBS. Lower is better.}
\label{tab:main_royle17}
\setlength{\tabcolsep}{1.5pt}
\renewcommand{\arraystretch}{1.00}
\begin{tabular}{lcccccccccccc}
\toprule
\textbf{Solver} &
\multicolumn{3}{c}{\textbf{Time (ms)}} &
\multicolumn{3}{c}{\textbf{Nodes}} &
\multicolumn{3}{c}{\textbf{Backtracks}} &
\multicolumn{3}{c}{\textbf{p95 Gap vs DiBS}} \\
\cmidrule(lr){2-4}\cmidrule(lr){5-7}\cmidrule(lr){8-10}\cmidrule(lr){11-13}
& \textbf{Avg} & \textbf{Med} & \textbf{p95}
& \textbf{Avg} & \textbf{Med} & \textbf{p95}
& \textbf{Avg} & \textbf{Med} & \textbf{p95}
& \textbf{Time} & \textbf{Nodes} & \textbf{Bkts} \\
\midrule
MRV
& 22165 & 4665 & 98711
& 19645 & 9811 & 73485
& 19582 & 9747 & 73421
& \bad{153.2\%} & \bad{1204.8\%} & \bad{552.7\%} \\

+ FC
& 11560 & 2457 & 51861
& 3870 & 1556 & 16360
& 7723 & 3095 & 32701
& \bad{33.1\%} & \bad{190.5\%} & \bad{190.7\%} \\

+ Degree
& 7418 & 2174 & 31582
& 13405 & 5719 & 56231
& 13341 & 5655 & 56167
& \good{19.0\%} & \bad{898.4\%} & \bad{399.3\%} \\

+ FC + LCV
& 9554 & 2088 & 42929
& 3248 & 1268 & 14096
& 6479 & 2516 & 28177
& \bad{10.1\%} & \bad{150.3\%} & \bad{150.5\%} \\

+ FC + Degree
& 5185 & 1529 & 22243
& 2246 & 845 & 9795
& 4477 & 1671 & 19580
& \good{42.9\%} & \bad{73.9\%} & \bad{74.1\%} \\

+ FC + LCV + Degree
& 4004 & 1118 & 17250
& 1667 & 584 & 7555
& 3319 & 1150 & 15100
& \good{55.7\%} & \bad{34.1\%} & \bad{34.2\%} \\

\midrule
\textbf{DiBS (Ours)}
& \textbf{9218} & \textbf{2705} & \textbf{38978}
& \textbf{1399} & \textbf{619} & \textbf{5632}
& \textbf{2783} & \textbf{1221} & \textbf{11249}
& -- & -- & --\\
\bottomrule
\end{tabular}
\end{table*}

\section{Experiments and Analyses}
\label{sec:experiments}

\subsection{Experiment Settings}
\textbf{Hardware.} All experiments are conducted on a shared compute server equipped with Intel Xeon Gold 6248 CPUs (40 cores) and 8 NVIDIA RTX 3090 GPUs (24GB each). The CPU-based solver experiments utilize 32 parallel workers, while DiBS leverages 8 GPUs, with 4 processes per GPU for parallel puzzle solving.

\textbf{Model Configuration.} The discrete diffusion model is trained using the MDM (Masked Diffusion Model) approach with a GPT2-based architecture. The model uses 3 transformer layers, 384 embedding dimensions, 12 attention heads, and a vocabulary size of 31. Training is performed with a batch size of 1024 across 8 GPUs using mixed precision (FP16), a learning rate of 1e-3 with cosine annealing, and 300 epochs. The diffusion process employs 20 timesteps with linear time weighting. DiBS uses diffusion-score weight $\alpha=0.8$ and is triggered when $|D_{v^\star(s)}| = 2$. The training data of over 1 million puzzles can be found on \cite{park2016sudoku}, and is much easier than our test data.

\textbf{Baseline and Metrics.} We compare DiBS against exact backtracking solvers. \textit{MRV} selects the variable with the smallest remaining domain \cite{haralick1980increasing,freuder1982sufficient}. \textit{FC} denotes forward checking, an explicit propagation enhancement \cite{haralick1980increasing}. \textit{LCV} prefers values that remove fewer neighbor options \cite{haralick1980increasing}. \textit{Degree} breaks MRV ties \cite{dechter1987network}. DiBS keeps its CP+MRV backbone and changes only value ordering at binary MRV states.

For evaluation, we report three lower-is-better metrics: \textit{Time} (wall-clock solving time), \textit{Nodes} (expanded search states), and \textit{Backtracks} (failed returns). All metrics are reported as average, median, and 95th percentile (p95). The Royle collection \cite{royle17collection} contains over 49,000 challenging 17-clue puzzles; 17 is the proven minimum for a uniquely solvable $9\times 9$ Sudoku \cite{mcguire2014no16}. Since CPU baselines and GPU-assisted DiBS use different hardware paths, time is a system-level metric; nodes and backtracks measure search cost.

\subsection{Main Results}

\textbf{Time performance.} Table~\ref{tab:main_royle17} shows that DiBS should be read as a search-efficient but inference-bearing method. Compared with MRV, DiBS reduces average time by 58.4\%, median time by 42.0\%, and p95 time by 60.5\%. Compared with MRV+FC, it further reduces average time by 20.3\% and p95 time by 24.8\%. Compared with MRV+FC+LCV, DiBS still achieves a 3.5\% reduction in average time and a 9.2\% reduction in p95 time. At the same time, DiBS does not surpass the strongest degree-based symbolic baselines in raw wall-clock time due to diffusion-model invocation cost. Across the benchmark, model inference alone accounts for 22.4\% of mean runtime. DiBS is deliberately evaluated on a minimal CP+MRV backbone to isolate the effect of diffusion-informed value ordering; degree tie-breaking is orthogonal and can be combined with DiBS.

\textbf{Search-cost performance.} Search-cost metrics reveal the main strength of DiBS. DiBS achieves the best average and p95 search cost among all compared solvers. Relative to MRV, it reduces average nodes by 92.9\%, average backtracks by 85.8\%, p95 nodes by 92.3\%, and p95 backtracks by 84.7\%. Even against MRV+FC+LCV+Degree, DiBS still reduces average nodes by 16.1\%, average backtracks by 16.1\%, p95 nodes by 25.5\%, and p95 backtracks by 25.5\%. This pattern is precisely what our theory predicts: DiBS improves value ordering at high-leverage binary states, and the gain is largest when a wrong first choice would open a deep failing subtree. The central contribution of DiBS is a systematic suppression of long-tail search cost.

\begin{table}[t]
\centering
\caption{Ablation and sensitivity analysis on a fixed 5,000-instance subset of Royle17. DiBS reports absolute values. Other rows report relative change against full DiBS. Time, Nodes, and Bkts denote mean values.}
\label{tab:ablation}
\setlength{\tabcolsep}{3.5pt}
\renewcommand{\arraystretch}{1.00}
\begin{tabular}{lcccc}
\toprule
\textbf{Variant} & \textbf{Time} & \textbf{Time p95} & \textbf{Nodes} & \textbf{Bkts} \\
\midrule
\textbf{DiBS}
& \textbf{7165.1}
& \textbf{30567.8}
& \textbf{1634.8}
& \textbf{3254.3} \\
logits-only
& \bad{16.35\%}
& \bad{14.53\%}
& \good{7.89\%}
& \good{7.93\%} \\
MRV$\geq$3
& \bad{14.38\%}
& \bad{10.01\%}
& \bad{88.24\%}
& \bad{88.62\%} \\
always-call
& \bad{162.64\%}
& \bad{163.61\%}
& \good{40.56\%}
& \good{40.82\%} \\
\midrule
$\alpha=0.3$
& \bad{0.10\%}
& \bad{0.29\%}
& \good{0.15\%}
& \good{0.15\%} \\
$\alpha=0.2$
& \bad{8.22\%}
& \bad{7.15\%}
& \good{4.15\%}
& \good{4.17\%} \\
$\alpha=0.5$
& \good{0.69\%}
& \good{1.17\%}
& \bad{0.59\%}
& \bad{0.59\%} \\
$\alpha=0.6$
& \bad{4.59\%}
& \bad{4.00\%}
& \good{2.78\%}
& \good{2.80\%} \\
\bottomrule
\end{tabular}
\end{table}

\subsection{Ablation Experiment}
Table~\ref{tab:ablation} examines which part of DiBS is responsible for the final efficiency profile.

\textbf{Component analysis.} The \textit{logits-only} variant increases mean time by 16.35\% and p95 time by 14.53\%, although nodes and backtracks decrease by about 7.9\%. Thus, the consistency term improves runtime, while logits-only yields a slightly smaller search tree. The \textit{MRV$\geq$3} variant shows the opposite failure mode: when guidance is withheld from binary MRV states, mean time increases by 14.38\%, nodes increase by 88.24\%, and backtracks increase by 88.62\%. This supports our theoretical analysis: binary branching errors can create large failing subtrees. The \textit{always-call} variant invokes the model at every branching node, reducing nodes by 40.56\% and backtracks by 40.82\%, but mean time deteriorates by 162.64\%, confirming that added inference cost overwhelms those gains.

\textbf{Sensitivity to $\alpha$.} DiBS is reasonably stable once the global diffusion score and the local consistency signal are combined within a moderate range. $\alpha=0.3$ changes all metrics by less than 0.3\%, while $\alpha=0.5$ improves mean time by 0.69\% and p95 time by 1.17\%. The only clearly weaker setting is $\alpha=0.2$, which increases mean time by 8.22\% and p95 time by 7.15\%. We use $\alpha=0.8$ as a representative setting. Overall, DiBS does not depend on delicate coefficient tuning.

\section{Discussion}

\subsection{DiBS vs.\ Puzzle Difficulty}
\begin{figure}[b]
\centering
\includegraphics[width=1\linewidth]{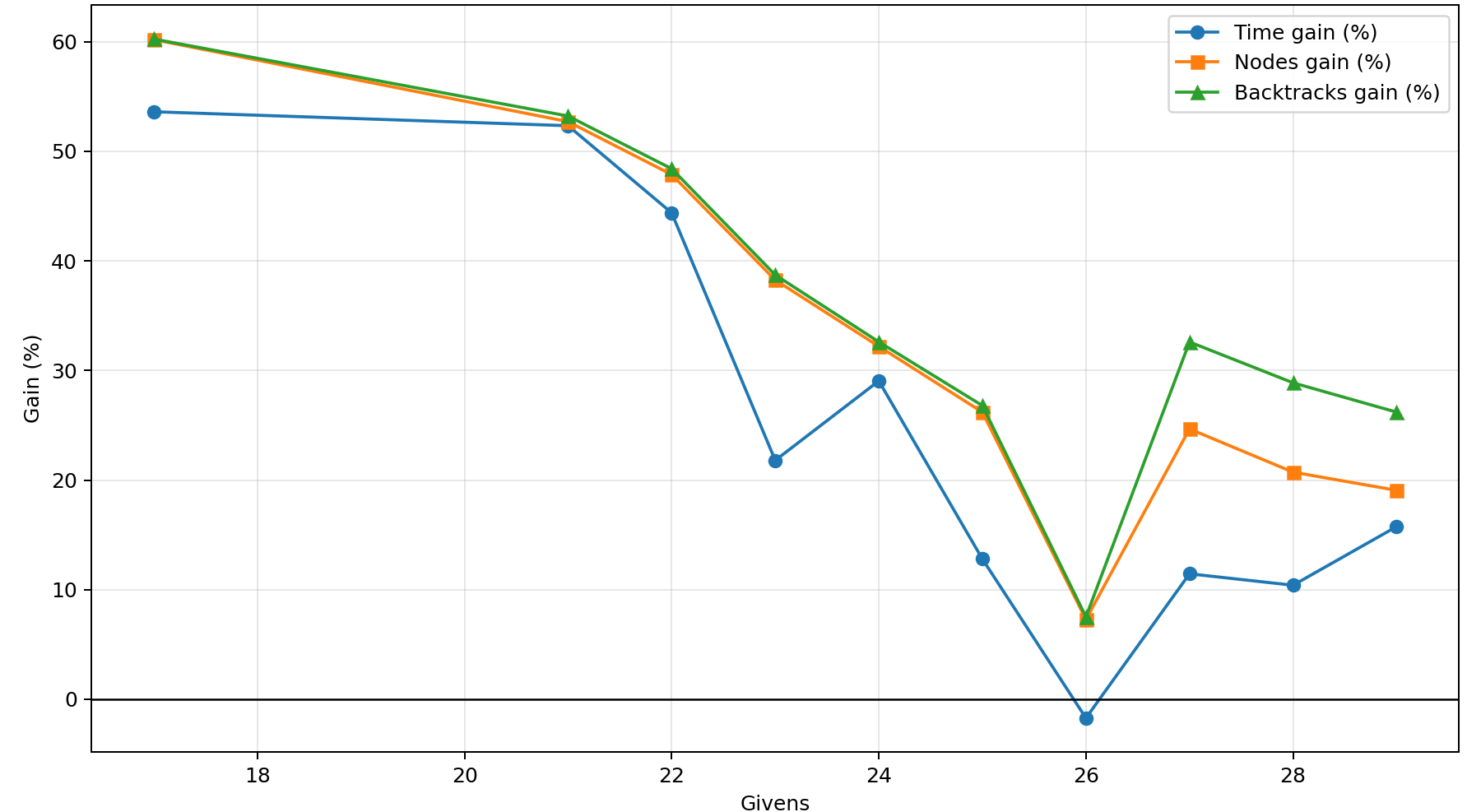}
\caption{DiBS relative gain over MRV+FC+LCV across puzzle difficulty (measured by givens).}
\label{fig:exp1_gain_curves}
\end{figure}

\textbf{Difficulty-dependent gains.} Figure~\ref{fig:exp1_gain_curves} shows that the advantage of DiBS grows systematically as puzzles become harder. In the hardest buckets with 17--21 givens, the reductions in time, nodes, and backtracks are consistently in the 40\%--60\% range. As the number of givens increases, the gains become smaller. In the easier 25--29-givens buckets, the reductions in nodes and backtracks are mostly around 20\%--30\%, while the time gain becomes visibly weaker. This result is directly aligned with our theoretical analysis. When a puzzle has fewer givens, propagation is weaker and search must rely more heavily on branching, making an early wrong value choice more likely to open a deep failing subtree. DiBS is designed precisely for this regime. The difficulty analysis validates our main claim: DiBS is especially effective on the instances where long-tail search cost is actually generated.

This trend is consistent with Theorems~\ref{thm:binary_cost} and~\ref{thm:optimal_guidance}. With fewer givens, constraint propagation leaves more ambiguity, so a wrong binary branch is more likely to remain locally feasible before a contradiction is detected, increasing the failing-subtree cost $T(s_w)$. Since Theorem~\ref{thm:binary_cost} shows that branch-ordering gains scale with $\Delta p_s \cdot T(s_w)$, the same improvement in choosing the solution-leading branch first yields larger savings on harder puzzles. For easier puzzles, wrong branches are pruned earlier, so $T(s_w)$ is smaller and the fixed diffusion-inference overhead becomes more visible. This explains why DiBS achieves its largest gains in the low-given buckets.

\subsection{DiBS vs.\ Denoising Steps}

\begin{figure}[t]
\centering
\includegraphics[width=1\linewidth]{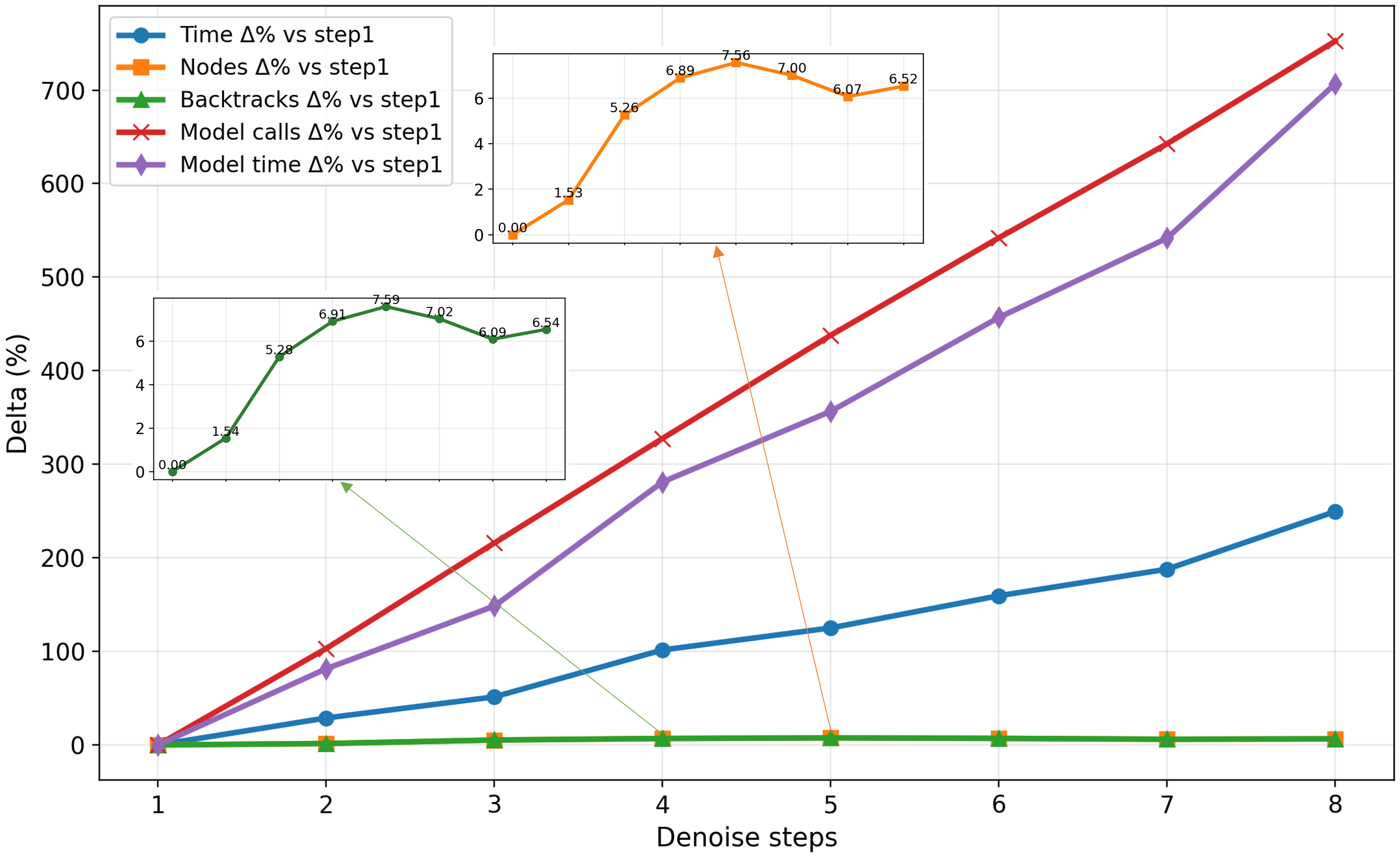}
\caption{Relative change against step=1 over denoising steps 1--8. End-to-end time and model-inference cost increase rapidly as the number of denoising steps grows, whereas nodes and backtracks vary only mildly.}
\label{fig:steps_delta_zoom}
\end{figure}

\textbf{Global efficiency trend.} We analyze the denoising-step hyperparameter using 5,000 Royle17 instances and steps 1 through 8. As shown in Fig.~\ref{fig:steps_delta_zoom}, increasing the number of denoising steps consistently worsens overall efficiency. Relative to step=1, step=2 already increases time by 28.83\%, nodes by 1.53\%, while model time increases by 81.44\%. At step=8, the time deterioration reaches 249.48\%. In DiBS, the diffusion model is used only to rank candidate values at binary branching states. Step=1 already captures most of the useful global signal needed for branch ordering, and larger steps are a computational burden.

\begin{figure}[t]
\centering
\includegraphics[width=0.92\linewidth]{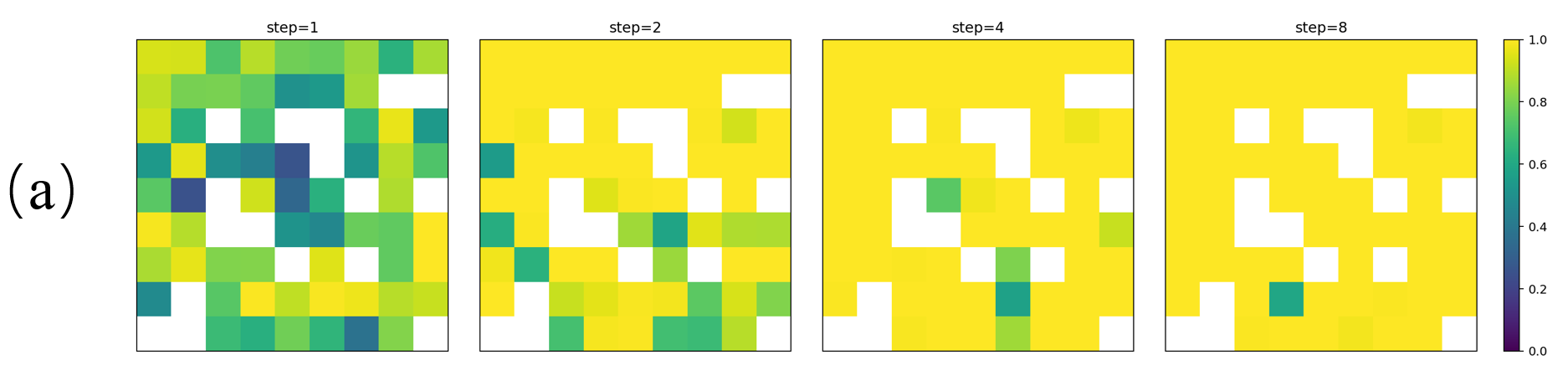}

\vspace{2pt}
\includegraphics[width=0.92\linewidth]{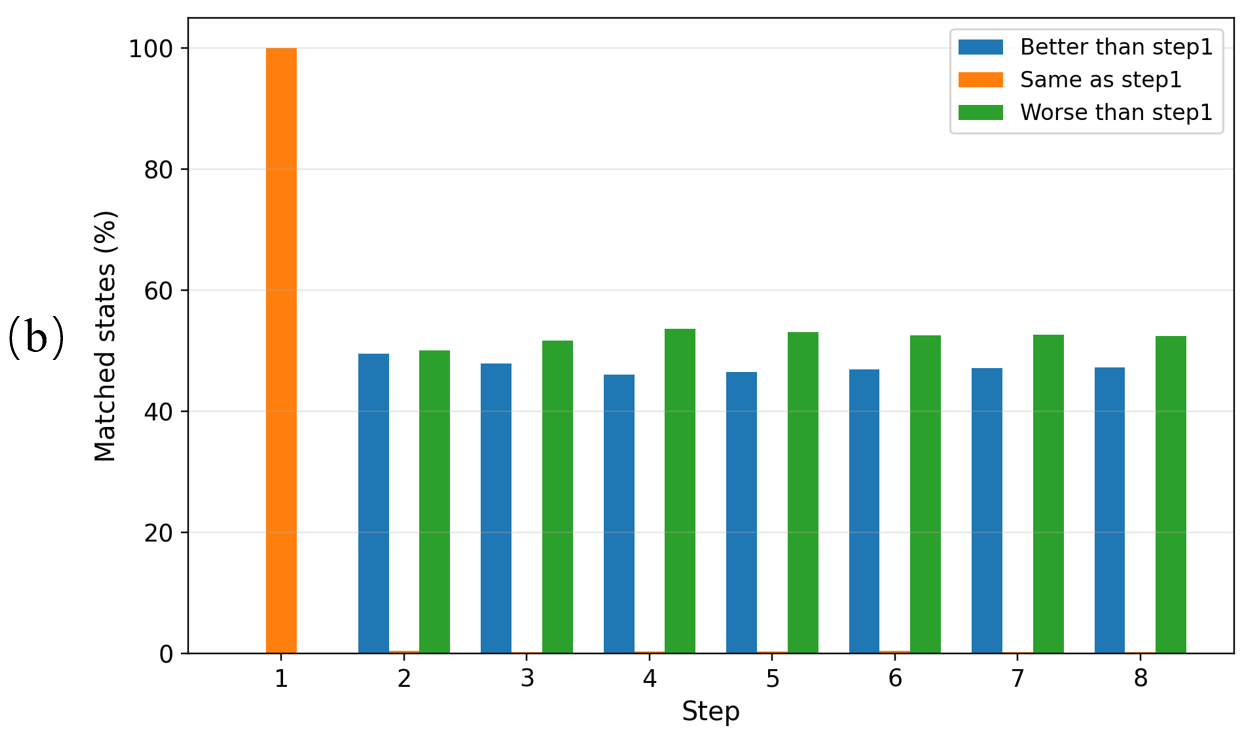}

\vspace{2pt}
\includegraphics[width=0.92\linewidth]{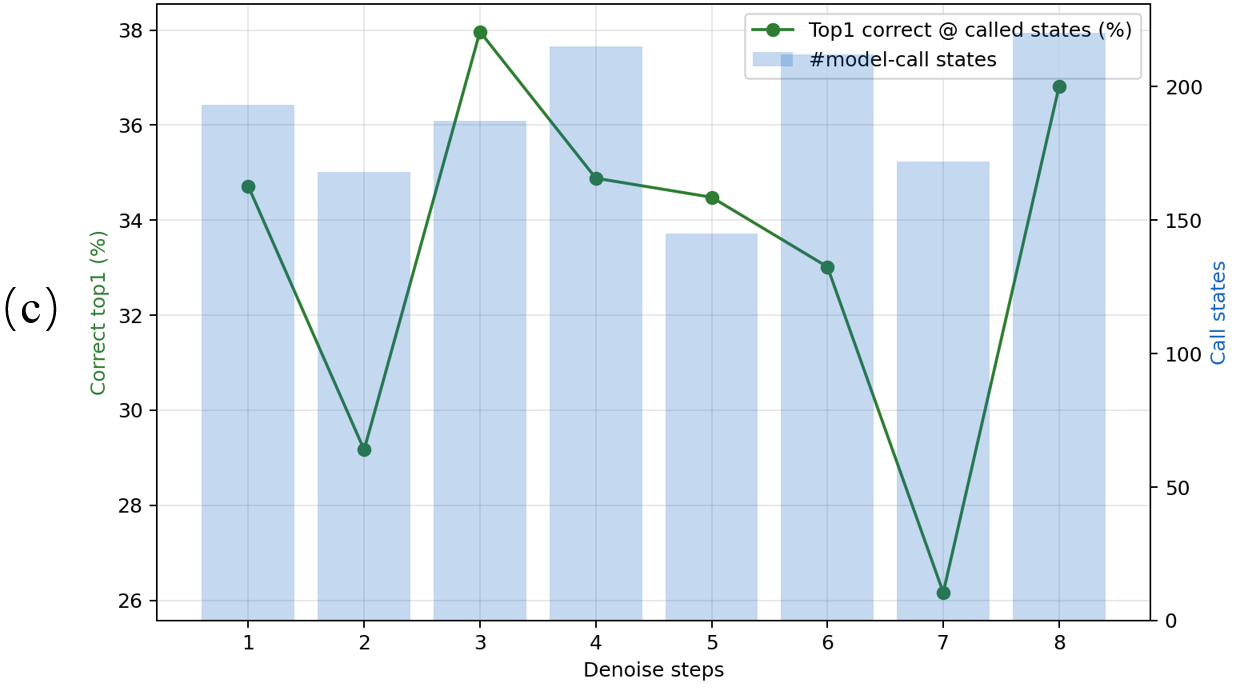}
\caption{Representative call-state visualization for denoising steps 1--8. (a) Confidence over unknown cells across steps. (b) Ranking changes vs.\ step=1 at matched call states. (c) Call-site top-1 ranking quality across steps.}
\label{fig:steps_logits_case}
\end{figure}

\textbf{Mechanistic explanation.} The call-state visualizations further explain why more denoising steps do not improve solver efficiency. In Fig.~\ref{fig:steps_logits_case}(a), larger steps indeed produce sharper confidence maps over unknown cells. However, sharper confidence is not equivalent to better branch ordering. In Fig.~\ref{fig:steps_logits_case}(b), the matched-call comparison against step=1 shows both better and worse ranking changes across steps 2--8. In Fig.~\ref{fig:steps_logits_case}(c), the call-site top-1 quality also fluctuates across steps instead of improving monotonically. This is fully consistent with our framework: search efficiency depends on the downstream cost of wrong branches, not on confidence or local ranking quality alone. Under the current DiBS integration, the default one-step call retains the useful global prior while keeping inference overhead low.

\subsection{DiBS vs.\ Other CSPs}
\begin{table}[t]
\centering
\caption{Generalization of DiBS to satisfiable 3-SAT instances. Nodes and Bkts denote search nodes and backtracks. All methods solve all 1,000 test instances for each task. Lower is better.}
\label{tab:other_csps}
\footnotesize
\scalebox{0.7}{
\renewcommand{\arraystretch}{1.00}
\begin{tabular}{l|l|c|c|c|c|c}
\toprule
\textbf{Task}~ & ~~~~~~~~~~\textbf{Solver} & ~\textbf{T p95}~ & ~\textbf{Nodes}~ & ~\textbf{Bkts}~ & ~\textbf{Nodes p95}~ & ~\textbf{Bkts p95}~ \\
\midrule
3sat5 & PySAT (Glucose4) & 0.06 & 5.11 & 1.56 & 6 & 3 \\
 & CP+First+True & 0.20 & 5.14 & 2.07 & 7 & 4 \\
 & CP+MRV+True & 0.26 & 5.04 & 1.97 & 7 & 4 \\
 & CP+MRV+JW & 0.33 & \textbf{3.52} & \textbf{0.45} & 6 & 3 \\
 & DiBS & 5.61 & 3.81 & 0.73 & 6 & 3 \\
\midrule
3sat7 & PySAT (Glucose4) & 0.07 & 5.96 & 1.98 & 8 & 4 \\
 & CP+First+True & 0.30 & 6.23 & 2.58 & 10 & 6 \\
 & CP+MRV+True & 0.40 & 5.96 & 2.41 & 10 & 6 \\
 & CP+MRV+JW & 0.51 & 4.69 & \textbf{1.12} & 8 & 5 \\
 & DiBS & 8.50 & \textbf{4.65} & 1.19 & 8 & 5 \\
\midrule
3sat9 & PySAT (Glucose4) & 0.11 & 6.71 & 2.31 & 10 & 5 \\
 & CP+First+True & 0.52 & 7.92 & 3.57 & 14 & 9 \\
 & CP+MRV+True & 0.66 & 7.18 & 2.98 & 12 & 8 \\
 & CP+MRV+JW & 0.83 & 5.95 & 1.66 & 11 & 6 \\
 & DiBS & 13.53 & \textbf{5.88} & \textbf{1.66} & 10 & 6 \\
\bottomrule
\end{tabular}
}
\end{table}

\textbf{SAT as another CSP.} Boolean satisfiability (SAT) is the problem of deciding whether a Boolean formula can be satisfied. For 3-SAT, an instance can be written as $\Phi(z)=\bigwedge_{j=1}^{m} C_j$, where $C_j=(\ell_{j1}\vee \ell_{j2}\vee \ell_{j3})$ and each literal is either $z_i$ or $\neg z_i$. This is a canonical CSP with Boolean variables and clause constraints \cite{biere2009sat,biere2021sat2}. In Table~\ref{tab:other_csps}, PySAT is the Python SAT-solver interface, Glucose4 is the CDCL solver, \textit{True} means a fixed value-ordering rule, and \textit{JW} denotes the Jeroslow-Wang literal score.

\textbf{DiBS adaptation to SAT.} The SAT version of DiBS keeps the same core logic: it only changes branch ordering. At a search state, the solver selects a Boolean variable $z_i$. The two candidate branches are $b^+: z_i=\mathrm{True}$ and $b^-: z_i=\mathrm{False}$. A SAT-trained MDM produces logits for the two literal tokens of $z_i$, and DiBS explores the higher-scoring branch first. The model is called only when the selected unresolved clause has at most two open literals.

\textbf{Results.} Table~\ref{tab:other_csps} shows that DiBS solves all 3-SAT test instances and consistently reduces search cost relative to the True-ordering baselines. Its advantage becomes clearer as the number of SAT variables increases. On 3sat5, DiBS is better than CP+MRV+True but still behind JW. On 3sat7, DiBS reaches the best mean nodes. On 3sat9, DiBS becomes the strongest search-cost method in mean nodes, ties the best mean backtracks, and improves the p95 node count over JW. The wall-clock results should be interpreted separately: DiBS makes 1.19--2.50 model calls on average with 4.13--6.58 ms model time; because these formulas are very small, inference cost dominates runtime.

\textbf{Interpretation.} These results close the empirical loop with our theory. Theorems~\ref{thm:binary_cost} and~\ref{thm:optimal_guidance} show that the value of a better branch order scales with $\Delta p_s\cdot T(s_w)$. Increasing the number of SAT variables enlarges the space in which a wrong literal choice can remain locally consistent, so the potential wrong-subtree cost becomes larger. 

The SAT experiment shows that the transferable part of DiBS is not any Sudoku-specific rule, but the interface between learned scoring and exact symbolic search. As long as a CSP admits a branching representation and candidate values can be scored under a partial assignment, the same design principle can be applied.

\section{Conclusion}

We introduced \textbf{DiBS}, a diffusion-informed branch-selection method that improves exact Sudoku solving by injecting learned guidance into a complete CP+MRV solver. Rather than replacing symbolic search or directly generating solutions, DiBS uses diffusion-model preferences to rank candidate values at high-leverage binary branching states, preserving completeness while targeting the long-tail inefficiency of backtracking. Our theoretical framework shows that better branch ordering saves cost in proportion to wrong-subtree expense, and that DiBS approaches the optimal policy by concentrating learned guidance where it matters most. Experiments on Royle's 17-clue benchmark demonstrate substantial reductions in search cost, especially in long-tail percentiles, and generalization to 3-SAT confirms the method's broader applicability to CSPs. Future directions include reducing model-call cost and applying the framework to larger combinatorial search problems.

{\small
\bibliography{aaai2027}

@article{mackworth1977consistency,
  title   = {Consistency in Networks of Relations},
  author  = {Mackworth, Alan K.},
  journal = {Artificial Intelligence},
  volume  = {8},
  number  = {1},
  pages   = {99--118},
  year    = {1977},
  doi     = {10.1016/0004-3702(77)90007-8}
}

@article{haralick1980increasing,
  title   = {Increasing Tree Search Efficiency for Constraint Satisfaction Problems},
  author  = {Haralick, Robert M. and Elliott, Gordon L.},
  journal = {Artificial Intelligence},
  volume  = {14},
  number  = {3},
  pages   = {263--313},
  year    = {1980},
  doi     = {10.1016/0004-3702(80)90051-X}
}

@inproceedings{regin1994filtering,
  title     = {A Filtering Algorithm for Constraints of Difference in {CSP}s},
  author    = {R{\'e}gin, Jean-Charles},
  booktitle = {Proceedings of the Twelfth National Conference on Artificial Intelligence},
  pages     = {362--367},
  year      = {1994}
}

@inproceedings{simonis2005sudoku,
  title     = {Sudoku as a Constraint Problem},
  author    = {Simonis, Helmut},
  booktitle = {Proceedings of the 4th International Workshop on Modelling and Reformulating Constraint Satisfaction Problems},
  pages     = {13--27},
  year      = {2005}
}

@inproceedings{palm2018recurrent,
  title     = {Recurrent Relational Networks},
  author    = {Palm, Rasmus Berg and Paquet, Ulrich and Winther, Ole and Bengio, Yoshua},
  booktitle = {Advances in Neural Information Processing Systems},
  volume    = {31},
  year      = {2018}
}

@inproceedings{wang2019satnet,
  title     = {{SATNet}: Bridging Deep Learning and Logical Reasoning Using a Differentiable Satisfiability Solver},
  author    = {Wang, Po-Wei and Donti, Priya L. and Wilder, Bryan and Kolter, J. Zico},
  booktitle = {Proceedings of the 36th International Conference on Machine Learning},
  series    = {Proceedings of Machine Learning Research},
  volume    = {97},
  pages     = {6545--6554},
  year      = {2019}
}

@inproceedings{khalil2016learning,
  title     = {Learning to Branch in Mixed Integer Programming},
  author    = {Khalil, Elias B. and Le Bodic, Pierre and Song, Le and Nemhauser, George and Dilkina, Bistra},
  booktitle = {Proceedings of the Thirtieth AAAI Conference on Artificial Intelligence},
  pages     = {724--731},
  year      = {2016}
}

@inproceedings{gasse2019exact,
  title     = {Exact Combinatorial Optimization with Graph Convolutional Neural Networks},
  author    = {Gasse, Maxime and Ch{\'e}telat, Didier and Ferroni, Nicola and Charlin, Laurent and Lodi, Andrea},
  booktitle = {Advances in Neural Information Processing Systems},
  volume    = {32},
  year      = {2019}
}

@inproceedings{ho2020denoising,
  title     = {Denoising Diffusion Probabilistic Models},
  author    = {Ho, Jonathan and Jain, Ajay and Abbeel, Pieter},
  booktitle = {Advances in Neural Information Processing Systems},
  volume    = {33},
  pages     = {6840--6851},
  year      = {2020}
}

@inproceedings{austin2021structured,
  title     = {Structured Denoising Diffusion Models in Discrete State-Spaces},
  author    = {Austin, Jacob and Johnson, Daniel D. and Ho, Jonathan and Tarlow, Daniel and van den Berg, Rianne},
  booktitle = {Advances in Neural Information Processing Systems},
  volume    = {34},
  pages     = {17981--17993},
  year      = {2021}
}

@inproceedings{weilbach2023graphically,
  title     = {Graphically Structured Diffusion Models},
  author    = {Weilbach, Christian D. and Harvey, William and Wood, Frank},
  booktitle = {Proceedings of the 40th International Conference on Machine Learning},
  series    = {Proceedings of Machine Learning Research},
  volume    = {202},
  pages     = {37659--37678},
  year      = {2023}
}

@inproceedings{sun2023difusco,
  title     = {{DIFUSCO}: Graph-based Diffusion Solvers for Combinatorial Optimization},
  author    = {Sun, Zhiqing and Yang, Yiming},
  booktitle = {Advances in Neural Information Processing Systems},
  volume    = {36},
  pages     = {3706--3731},
  year      = {2023}
}

@inproceedings{ye2025beyond,
  title     = {Beyond Autoregression: Discrete Diffusion for Complex Reasoning and Planning},
  author    = {Ye, Jiacheng and Gao, Jiahui and Gong, Shansan and Zheng, Lin and Jiang, Xin and Li, Zhenguo and Kong, Lingpeng},
  booktitle = {International Conference on Learning Representations},
  year      = {2025}
}

@article{schrittwieser2020muzero,
  title   = {Mastering Atari, Go, Chess and Shogi by Planning with a Learned Model},
  author  = {Schrittwieser, Julian and Antonoglou, Ioannis and Hubert, Thomas and Simonyan, Karen and Sifre, Laurent and Schmitt, Simon and Guez, Arthur and Lockhart, Edward and Hassabis, Demis and Graepel, Thore and Lillicrap, Timothy and Silver, David},
  journal = {Nature},
  volume  = {588},
  number  = {7839},
  pages   = {604--609},
  year    = {2020},
  doi     = {10.1038/s41586-020-03051-4}
}

@inproceedings{janner2022diffuser,
  title     = {Planning with Diffusion for Flexible Behavior Synthesis},
  author    = {Janner, Michael and Du, Yilun and Tenenbaum, Joshua B. and Levine, Sergey},
  booktitle = {Proceedings of the 39th International Conference on Machine Learning},
  series    = {Proceedings of Machine Learning Research},
  volume    = {162},
  pages     = {9902--9915},
  year      = {2022}
}

@article{kingma2013autoencoding,
  title={Auto-Encoding Variational Bayes},
  author={Kingma, Diederik P. and Welling, Max},
  journal={arXiv preprint arXiv:1312.6114},
  year={2013}
}

@inproceedings{goodfellow2014gan,
  title={Generative Adversarial Nets},
  author={Goodfellow, Ian and Pouget-Abadie, Jean and Mirza, Mehdi and Xu, Bing and Warde-Farley, David and Ozair, Sherjil and Courville, Aaron and Bengio, Yoshua},
  booktitle={Advances in Neural Information Processing Systems},
  year={2014}
}

@inproceedings{vaswani2017attention,
  title={Attention Is All You Need},
  author={Vaswani, Ashish and Shazeer, Noam and Parmar, Niki and Uszkoreit, Jakob and Jones, Llion and Gomez, Aidan N. and Kaiser, Lukasz and Polosukhin, Illia},
  booktitle={Advances in Neural Information Processing Systems},
  year={2017}
}

@article{song2020ddim,
  title={Denoising Diffusion Implicit Models},
  author={Song, Jiaming and Meng, Chenlin and Ermon, Stefano},
  journal={arXiv preprint arXiv:2010.02502},
  year={2020}
}

@inproceedings{song2021sde,
  title={Score-Based Generative Modeling through Stochastic Differential Equations},
  author={Song, Yang and Sohl-Dickstein, Jascha and Kingma, Diederik P. and Kumar, Abhishek and Ermon, Stefano and Poole, Ben},
  booktitle={International Conference on Learning Representations},
  year={2021}
}

@article{campbell2022ctdd,
  title={A Continuous Time Framework for Discrete Denoising Models},
  author={Campbell, Andrew and Benton, Joe and De Bortoli, Valentin and Rainforth, Tom and Deligiannidis, George and Doucet, Arnaud},
  journal={arXiv preprint arXiv:2205.14987},
  year={2022}
}

@article{rombach2022ldm,
  title={High-Resolution Image Synthesis with Latent Diffusion Models},
  author={Rombach, Robin and Blattmann, Andreas and Lorenz, Dominik and Esser, Patrick and Ommer, Bj{\"o}rn},
  journal={arXiv preprint arXiv:2112.10752},
  year={2022}
}

@article{peebles2023dit,
  title={Scalable Diffusion Models with Transformers},
  author={Peebles, William and Xie, Saining},
  journal={arXiv preprint arXiv:2212.09748},
  year={2023}
}

@article{ha2018worldmodels,
  title={World Models},
  author={Ha, David and Schmidhuber, J{\"u}rgen},
  journal={arXiv preprint arXiv:1803.10122},
  year={2018}
}

@article{hafner2023dreamerv3,
  title={Mastering Diverse Domains through World Models},
  author={Hafner, Danijar and Pasukonis, Jurgis and Ba, Jimmy and Lillicrap, Timothy},
  journal={arXiv preprint arXiv:2301.04104},
  year={2023}
}

@inproceedings{bruce2024genie,
  title={Genie: Generative Interactive Environments},
  author={Bruce, Jake and others},
  booktitle={International Conference on Machine Learning},
  year={2024}
}

@article{brohan2023rt2,
  title={RT-2: Vision-Language-Action Models Transfer Web Knowledge to Robotic Control},
  author={Brohan, Anthony and others},
  journal={arXiv preprint arXiv:2307.15818},
  year={2023}
}

@article{wewer2025srm,
  title={Spatial Reasoning with Denoising Models},
  author={Wewer, Christopher and others},
  journal={arXiv preprint arXiv:2502.21075},
  year={2025}
}

@inproceedings{kim2024tokenorder,
  title={Train for the Worst, Plan for the Best: Understanding Token Ordering in Masked Diffusions},
  author={Kim, Jaeyeon and Shah, Kulin and Kontonis, Vasilis and Kakade, Sham M. and Chen, Sitan},
  booktitle={Proceedings of the 42nd International Conference on Machine Learning},
  series={Proceedings of Machine Learning Research},
  volume={267},
  pages={30749--30768},
  year={2025}
}

@article{garg2025learnedorder,
  title={Masked Diffusion Models are Secretly Learned-Order Autoregressive Models},
  author={Garg, Prateek and Kohli, Bhavya and Sarawagi, Sunita},
  journal={arXiv preprint arXiv:2511.19152},
  year={2025}
}

@article{freuder1982sufficient,
  title={A Sufficient Condition for Backtrack-Free Search},
  author={Freuder, Eugene C.},
  journal={Journal of the ACM},
  volume={29},
  number={1},
  pages={24--32},
  year={1982}
}

@article{mohr1993ac6,
  title={Arc-Consistency and Arc-Consistency Again},
  author={Bessiere, Christian},
  journal={Artificial Intelligence},
  volume={65},
  number={1},
  pages={179--190},
  year={1994}
}

@book{apt2003principles,
  title={Principles of Constraint Programming},
  author={Apt, Krzysztof R.},
  publisher={Cambridge University Press},
  year={2003}
}

@book{dechter2003constraint,
  title={Constraint Processing},
  author={Dechter, Rina},
  publisher={Morgan Kaufmann},
  year={2003}
}

@book{rossi2006handbook,
  title={Handbook of Constraint Programming},
  editor={Rossi, Francesca and van Beek, Peter and Walsh, Toby},
  publisher={Elsevier},
  year={2006}
}

@book{russell2010aima,
  title={Artificial Intelligence: A Modern Approach},
  author={Russell, Stuart and Norvig, Peter},
  edition={3},
  publisher={Prentice Hall},
  year={2010}
}

@inproceedings{scavuzzo2022treemdp,
  title={Learning to Branch with Tree MDPs},
  author={Scavuzzo, Luca and Chen, Didier and Beinke, Daniel and Lodi, Andrea and Bengio, Yoshua and Prouvost, Antoine},
  booktitle={NeurIPS},
  year={2022}
}

@article{bartak2010survey,
  title={New Trends in Constraint Satisfaction, Planning, and Scheduling: A Survey},
  author={Bart{\'a}k, Roman and Salido, Miguel A. and Rossi, Francesca},
  journal={The Knowledge Engineering Review},
  volume={25},
  number={3},
  pages={249--279},
  year={2010}
}

@article{gotlieb2012tcas,
  title={TCAS Software Verification Using Constraint Programming},
  author={Gotlieb, Arnaud},
  journal={The Knowledge Engineering Review},
  volume={27},
  number={4},
  pages={495--526},
  year={2012}
}

@article{benavides2010featuremodels,
  title={Automated Analysis of Feature Models 20 Years Later: A Literature Review},
  author={Benavides, David and Segura, Sergio and Ruiz-Cort{\'e}s, Antonio},
  journal={Information Systems},
  volume={35},
  number={6},
  pages={615--636},
  year={2010}
}

@book{biere2009sat,
  title={Handbook of Satisfiability},
  editor={Biere, Armin and Heule, Marijn and van Maaren, Hans and Walsh, Toby},
  publisher={IOS Press},
  year={2009}
}

@book{biere2021sat2,
  title={Handbook of Satisfiability, Second Edition},
  editor={Biere, Armin and Heule, Marijn and van Maaren, Hans and Walsh, Toby},
  publisher={IOS Press},
  year={2021}
}

@article{dechter1987network,
  author    = {Rina Dechter and Judea Pearl},
  title     = {Network-Based Heuristics for Constraint-Satisfaction Problems},
  journal   = {Artificial Intelligence},
  volume    = {34},
  number    = {1},
  pages     = {1--38},
  year      = {1987},
  doi       = {10.1016/0004-3702(87)90002-6}
}

@misc{royle17collection,
  title={Minimum Sudoku},
  author={Royle, Gordon},
  howpublished={\url{https://web.archive.org/web/20140214182844/http://school.maths.uwa.edu.au/~gordon/sudokumin.php}},
  note={Archived webpage containing the 17-clue Sudoku collection},
  year={2014}
}

@article{mcguire2014no16,
  title={There Is No 16-Clue Sudoku: Solving the Sudoku Minimum Number of Clues Problem via Hitting Set Enumeration},
  author={McGuire, Gary and Tugemann, Bastian and Civario, Gilles},
  journal={Experimental Mathematics},
  volume={23},
  number={2},
  pages={190--217},
  year={2014},
  doi={10.1080/10586458.2013.870056}
}

@misc{park2016sudoku,
  author = {Kyubyong Park},
  title = {1 Million Sudoku Games},
  year = {2016},
  howpublished = {\url{https://www.kaggle.com/datasets/bryanpark/sudoku}},
  note = {Accessed: 2023-04-28}
}
}

\clearpage

﻿
\section{Supplementary Material}

\section{Proof of Theorem 1 (Binary Cost Decomposition)}

We prove the stated decomposition for a binary branching state $s$.
Let the two children be $s_g$ (good, solution-leading) and $s_w$ (wrong, failing), and
let  $T(s_g)$ and $T(s_w)$ denote the costs of fully exploring their subtrees.
We consider a depth-first backtracking solver that explores the first-chosen branch to completion,
and explores the second branch only if the first branch fails.
This matches the complete solver algorithm (Algorithm~\ref{alg:dibs_solver} in the supplementary material) at a binary branching node.

Let $\pi$ be a branch-ordering policy at state $s$, and define the event
\[
A := \{\pi \text{ explores } s_g \text{ before } s_w\}.
\]
By definition, $\mathbb{P}(A)=p_s(\pi)$.
Let $C_\pi(s)$ be the total cost incurred starting from state $s$ until a solution is found.
Under the unique-solution assumption, the solver finds a solution if and only if it reaches
the good branch $s_g$.
Therefore, conditioned on the ordering event $A$:
\begin{itemize}
\item If $A$ occurs (good-first), then the solver explores the subtree rooted at $s_g$
until a solution is found, and never needs to fully explore the wrong subtree. Hence
\[
C_\pi(s) = T(s_g) \qquad \text{on } A.
\]
\item If $A^c$ occurs (wrong-first), then the solver must exhaust the failing subtree rooted at $s_w$
before backtracking to try $s_g$, after which it incurs $T(s_g)$ to find the solution. Hence
\[
C_\pi(s) = T(s_w) + T(s_g) \qquad \text{on } A^c.
\]
\end{itemize}
Introduce the indicator $\mathbf{1}_A$. We can write this piecewise definition compactly as
\[
C_\pi(s) = T(s_g) + \mathbf{1}_{A^c}\,T(s_w).
\]
Taking expectations on both sides yields
\[
\mathbb{E}[C_\pi(s)]
=
\mathbb{E}[T(s_g)] + \mathbb{E}\!\left[\mathbf{1}_{A^c}\,T(s_w)\right].
\]
At a fixed state $s$, the randomization of the policy affects only the ordering event $A$.
Since $T(s_w)$ is determined by the instance and the solver dynamics within the subtree once it is entered,
we use the standard conditioning identity:
\[
\mathbb{E}\!\left[\mathbf{1}_{A^c}\,T(s_w)\right]
=
\mathbb{P}(A^c)\,\mathbb{E}[T(s_w)]
=
\bigl(1-p_s(\pi)\bigr)\,\mathbb{E}[T(s_w)].
\]
Combining the above establishes
\[
\mathbb{E}[C_\pi(s)]
=
\mathbb{E}[T(s_g)] + \bigl(1-p_s(\pi)\bigr)\,\mathbb{E}[T(s_w)].
\]
Finally, for two policies $\pi_a,\pi_b$, subtracting their decompositions gives
\[
\mathbb{E}[C_{\pi_a}(s)]-\mathbb{E}[C_{\pi_b}(s)]
=
\bigl(p_s(\pi_b)-p_s(\pi_a)\bigr)\,\mathbb{E}[T(s_w)],
\]
which completes the proof.
\qed

\section{Proof of Theorem 2 (Optimal Guidance and DiBS Gap)}

We work at a fixed satisfiable binary branching state $s$.
Recall that $\Pi_s$ denotes the class of all branch-ordering policies at $s$ that keep the same symbolic
backbone and differ only in how they order the two candidates in $D_{v^\star(s)}$.
For each $\pi\in\Pi_s$, the quantity $p_s(\pi)$ is the probability that $\pi$ explores the
solution-leading child $s_g$ before the failing child $s_w$.
We also defined
\[
p_s^\star := \sup_{\pi\in\Pi_s} p_s(\pi),
\]
and let $\pi_s^\star\in\Pi_s$ denote any policy attaining this supremum.

We first prove the equivalence stated in Eq.~\eqref{eq:optimal_equiv} of the main paper.
By Theorem~1, for every policy $\pi\in\Pi_s$,
\[
\mathbb{E}[C_\pi(s)]
=
\mathbb{E}[T(s_g)]
+
\bigl(1-p_s(\pi)\bigr)\,\mathbb{E}[T(s_w)].
\]
At the fixed state $s$, both $\mathbb{E}[T(s_g)]$ and $\mathbb{E}[T(s_w)]$ are independent of the
choice of policy, and $\mathbb{E}[T(s_w)]>0$.
Therefore $\mathbb{E}[C_\pi(s)]$ is an affine strictly decreasing function of $p_s(\pi)$.
It follows that minimizing $\mathbb{E}[C_\pi(s)]$ over $\Pi_s$ is equivalent to maximizing $p_s(\pi)$ over $\Pi_s$:
\[
\arg\min_{\pi\in\Pi_s}\mathbb{E}[C_\pi(s)]
=
\arg\max_{\pi\in\Pi_s} p_s(\pi).
\]
Since $\pi_s^\star$ attains $p_s^\star$, it belongs to both sets, establishing the equivalence.

Next, let $\pi\in\Pi_s$ be arbitrary.
Applying Theorem~1 once to $\pi$ and once to $\pi_s^\star$, we obtain
\[
\mathbb{E}[C_\pi(s)]
=
\mathbb{E}[T(s_g)]
+
\bigl(1-p_s(\pi)\bigr)\,\mathbb{E}[T(s_w)],
\]
and
\[
\mathbb{E}[C_{\pi_s^\star}(s)]
=
\mathbb{E}[T(s_g)]
+
\bigl(1-p_s(\pi_s^\star)\bigr)\,\mathbb{E}[T(s_w)].
\]
Subtracting the two identities gives
\[
\mathbb{E}[C_\pi(s)]-\mathbb{E}[C_{\pi_s^\star}(s)]
=
\bigl(p_s(\pi_s^\star)-p_s(\pi)\bigr)\,\mathbb{E}[T(s_w)].
\]
By definition of $\pi_s^\star$, we have $p_s(\pi_s^\star)=p_s^\star$, hence
\[
\mathbb{E}[C_\pi(s)]-\mathbb{E}[C_{\pi_s^\star}(s)]
=
\bigl(p_s^\star-p_s(\pi)\bigr)\,\mathbb{E}[T(s_w)].
\]

Finally, for DiBS we substitute $\pi=\pi_\theta$ and use the definition
\[
\delta_s(\theta):=p_s^\star-p_s(\pi_\theta).
\]
Therefore,
\[
\mathbb{E}[C_{\pi_\theta}(s)]-\mathbb{E}[C_{\pi_s^\star}(s)]
=
\delta_s(\theta)\,\mathbb{E}[T(s_w)].
\]
This completes the proof.
\qed

\section{Global Expected Complexity of DiBS}

The statewise results in Theorem~1 and Theorem~2 explain the value of better branch ordering at a single binary MRV state. We now lift this view to the whole solving process.

We emphasize that, because DiBS preserves completeness and only changes branch ordering, it does not alter the worst-case complexity class of complete backtracking in the absence of additional assumptions. Therefore, the meaningful object of analysis is the \emph{expected} solving cost under an instance distribution or, equivalently, under a probabilistic model of branch-order correctness. For standard $9\times 9$ Sudoku, this should be read as an expected-cost statement over a fixed-size benchmark; asymptotic order statements become meaningful for generalized Sudoku or CSP families whose size grows.

Consider a satisfiable instance with a unique solution. Let $s_1,\dots,s_H$ denote the binary-trigger states on the solution-leading trajectory, i.e., the MRV states satisfying $|D_{v^\star(s_h)}|=2$ that are eventually encountered by the complete solver. For each triggered state $s_h$, let $s_{h,g}$ and $s_{h,w}$ denote the solution-leading child and the failing sibling child, respectively, in the same sense as $s_g$ and $s_w$ in Theorem~1. Define $A_h := \{\pi_\theta \text{ explores } s_{h,g} \text{ before } s_{h,w}\}$, and let $p_h(\theta) := p_{s_h}(\pi_\theta) = \mathbb{P}(A_h\mid s_h)$ be the branch-ordering success probability of DiBS at state $s_h$. Let $T(s_{h,w})$ be the cost of fully exploring the failing sibling subtree rooted at $s_{h,w}$, excluding the model call at $s_h$. Let $\tau_h(\theta)$ be the inference cost of the DiBS call at $s_h$, and let $G_H$ denote the cost incurred along the solution-leading trajectory, excluding all failing sibling subtrees and excluding the DiBS inference overheads at the $H$ triggered states.

With this notation, the global analysis below is exactly the pathwise counterpart of the local decomposition in Theorem~1: the local term $(1-p_s(\pi))T(s_w)$ is accumulated over all triggered binary states as $\sum_{h=1}^{H} (1-p_{s_h}(\pi_\theta))T(s_{h,w})$.

\begin{theorem}[Global posterior-weighted cost decomposition]
\label{thm:global_cost}
For a satisfiable instance with a unique solution, the total solving cost of DiBS satisfies
\begin{equation}
C_{\pi_\theta} = G_H + \sum_{h=1}^{H} \tau_h(\theta) + \sum_{h=1}^{H} \mathbf{1}_{A_h^c}\, T(s_{h,w}).
\label{eq:global_pathwise}
\end{equation}
Consequently,
\begin{equation}
\mathbb{E}[C_{\pi_\theta}] = \mathbb{E}[G_H] + \sum_{h=1}^{H} \mathbb{E}[\tau_h(\theta)] + \sum_{h=1}^{H} \mathbb{E}\!\left[ \bigl(1-p_{s_h}(\pi_\theta)\bigr) T(s_{h,w}) \right].
\label{eq:global_expectation}
\end{equation}
If $p_{s_h}(\pi_\theta)$, $T(s_{h,w})$, and $\tau_h(\theta)$ are treated as deterministic conditional on the instance, then
$\mathbb{E}[C_{\pi_\theta}\mid \mathcal{I}] = G_H + \sum_{h=1}^{H} \tau_h(\theta) + \sum_{h=1}^{H} (1-p_{s_h}(\pi_\theta))T(s_{h,w})$.
\end{theorem}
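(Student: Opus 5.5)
The plan is to prove the pathwise identity \eqref{eq:global_pathwise} by partitioning the solver's total work, and then obtain \eqref{eq:global_expectation} by taking expectations term by term, conditioning on each triggered state $s_h$ in the same way as in the proof of Theorem~\ref{thm:binary_cost}.

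\textbf{Step 1: partition the work.} Fix the instance. Because the solution is unique and the solver is a complete depth-first search, the set of expanded nodes splits into disjoint pieces:
\begin{itemize}
\item the nodes on the solution-leading trajectory from the root to the solution;
\item for each branching state on that trajectory, the failing sibling subtrees that are fully explored before the solution-leading child is entered.
\end{itemize}
The work of the solver is therefore the sum of the costs of these pieces.

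At non-triggered states, the baseline ordering is used. Any failing siblings explored there are counted in $G_H$; this is how the definition of $G_H$ should be read, namely excluding only the triggered failing siblings. I would state this reading explicitly, since it is the only way to make the identity exact.

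At a triggered state $s_h$, the argument is the same case split as in Theorem~\ref{thm:binary_cost}:
\begin{itemize}
\item on $A_h$, the failing child $s_{h,w}$ is never entered before the solution is found;
\item on $A_h^c$, the subtree at $s_{h,w}$ is exhausted first, contributing exactly $T(s_{h,w})$, and it is disjoint from all other pieces.
\end{itemize}
In both cases the model call at $s_h$ contributes $\tau_h(\theta)$.

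Summing over the pieces gives \eqref{eq:global_pathwise}. No double counting occurs: $T(s_{h,w})$ excludes the call at $s_h$, $G_H$ excludes all inference costs, and DiBS calls made inside a failing subtree are already included in that subtree's $T(s_{h,w})$.

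\textbf{Step 2: take expectations.} By linearity,
\[
\mathbb{E}[C_{\pi_\theta}] = \mathbb{E}[G_H] + \sum_{h=1}^{H}\mathbb{E}[\tau_h(\theta)] + \sum_{h=1}^{H}\mathbb{E}\bigl[\mathbf{1}_{A_h^c}T(s_{h,w})\bigr].
\]
For each $h$, I would apply the tower property, conditioning on $s_h$. The ordering randomness at $s_h$ affects only $A_h$; given $s_h$, it is independent of the dynamics inside $s_{h,w}$. Hence
\[
\mathbb{E}\bigl[\mathbf{1}_{A_h^c}T(s_{h,w})\mid s_h\bigr]=\bigl(1-p_{s_h}(\pi_\theta)\bigr)\,\mathbb{E}[T(s_{h,w})\mid s_h].
\]
Taking the outer expectation gives the last sum in \eqref{eq:global_expectation}.

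\textbf{Step 3: the deterministic special case.} When $p_{s_h}(\pi_\theta)$, $T(s_{h,w})$, and $\tau_h(\theta)$ are deterministic given $\mathcal{I}$, the final statement follows immediately. Here $G_H$ is also fixed given $\mathcal{I}$, since it is determined by the instance and the solution-leading trajectory.

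\textbf{Main obstacle.} The delicate point is that $H$ and the states $s_h$ are themselves random. They can depend on the earlier orderings, because inference is state-dependent and the trajectory to $s_h$ may be reached through different partial assignments. To handle this, I would argue as follows. The solution-leading trajectory is determined by the unique solution and the deterministic propagation and MRV backbone, because value ordering never changes which child leads to the solution. Hence $H$ and $s_1,\dots,s_H$ are fixed given the instance. This justifies writing a fixed-$H$ sum and conditioning on each $s_h$ separately. If this fails, I would instead sum with indicators $\mathbf{1}\{h\le H\}$ and use optional-stopping-style conditioning along the filtration of visited triggered states.
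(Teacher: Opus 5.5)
Your proposal is correct and follows essentially the same route as the paper: a pathwise partition of the work into the spine cost $G_H$, the inference costs $\tau_h(\theta)$, and the disjoint failing-sibling subtrees weighted by the mistake indicators $\mathbf{1}_{A_h^c}$, followed by linearity of expectation and conditioning on $s_h$. You also make explicit three points the paper's proof leaves implicit: that $G_H$ must absorb failing siblings at non-triggered spine states, that the spine (hence $H$ and $s_1,\dots,s_H$) is fixed by the instance because value ordering never changes which child is solution-leading, and that the ordering at $s_h$ is conditionally independent of $T(s_{h,w})$ given $s_h$.
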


\begin{proof} For each triggered state $s_h$, define the mistake indicator $M_h:=\mathbf{1}_{A_h^c}$. If $A_h$ occurs, DiBS explores the solution-leading child first, and the failing sibling subtree rooted at $s_{h,w}$ is not explored before the solution is found. If $A_h^c$ occurs, the solver first enters the wrong child. Since $s_{h,w}$ is not solution-leading, the complete backtracking procedure must fully exhaust this failing subtree before returning to the solution-leading branch. Hence the additional cost contributed by state $s_h$ is $M_hT(s_{h,w})$. The failing sibling subtrees corresponding to different triggered states are disjoint, because each of them is attached to a different state on the solution-leading trajectory. After a wrong sibling subtree is exhausted, the solver backtracks to the corresponding spine state and continues along the solution-leading child. Therefore, no failing subtree is counted twice. The remaining cost consists of the cost $G_H$ along the solution-leading trajectory and the model-inference costs $\sum_{h=1}^{H}\tau_h(\theta)$. Thus, $C_{\pi_\theta}=G_H+\sum_{h=1}^{H}\tau_h(\theta)+\sum_{h=1}^{H}M_hT(s_{h,w})$. Taking expectations gives $\mathbb{E}[C_{\pi_\theta}]=\mathbb{E}[G_H]+\sum_{h=1}^{H}\mathbb{E}[\tau_h(\theta)]+\sum_{h=1}^{H}\mathbb{E}[M_hT(s_{h,w})]$. Since $\mathbb{P}(M_h=1\mid s_h)=1-p_{s_h}(\pi_\theta)$, we have $\mathbb{E}[M_hT(s_{h,w})]=\mathbb{E}\!\left[(1-p_{s_h}(\pi_\theta))T(s_{h,w})\right]$. This proves Eq.~\eqref{eq:global_expectation}. If $p_{s_h}(\pi_\theta)$, $T(s_{h,w})$, and $\tau_h(\theta)$ are deterministic conditional on the instance, the conditional form follows directly. \end{proof}

Theorem~3 makes the global mechanism of DiBS explicit. The total cost consists of three parts: $G_H$ is contributed by the symbolic backbone along the solution-leading trajectory; $\sum_h \tau_h(\theta)$ is the price paid for querying the diffusion model; $\sum_h (1-p_{s_h}(\pi_\theta))T(s_{h,w})$ is the expected search waste caused by trying the failing sibling branch first. Thus, DiBS is beneficial when it reduces the probability of wrong-first ordering on states whose failing sibling subtree $T(s_{h,w})$ is large.

Let $\rho_{\theta,H}\in[0,1]$ denote a lower bound on the DiBS branch-ordering success probability over the $H$ triggered states: $p_{s_h}(\pi_\theta)\ge \rho_{\theta,H}$. A stronger model corresponds to a larger $\rho_{\theta,H}$, or equivalently a smaller branch-ordering error $1-\rho_{\theta,H}$.

\begin{theorem}[Capability-dependent global complexity bound]
\label{thm:capability_bound}
Suppose that for all triggered states: $p_{s_h}(\pi_\theta)\ge \rho_{\theta,H}$, $\tau_h(\theta)\le \bar{\tau}_{\theta,H}$, and $T(s_{h,w}) \le c\, b^{H-h}$ for some $c>0$, $b>1$. Then
\[
\mathbb{E}[C_{\pi_\theta}\mid \mathcal{I}] = O\!\Big( H(\kappa_H+\bar{\tau}_{\theta,H}) + (1-\rho_{\theta,H})\, b^H \Big),
\]
where $\kappa_H$ bounds the per-step cost on the solution path.
\end{theorem}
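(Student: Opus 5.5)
We start from the conditional form of Theorem~\ref{thm:global_cost}:
\[
\mathbb{E}[C_{\pi_\theta}\mid \mathcal{I}] = G_H + \sum_{h=1}^{H} \tau_h(\theta) + \sum_{h=1}^{H} \bigl(1-p_{s_h}(\pi_\theta)\bigr)T(s_{h,w}).
\]
We bound each of the three terms separately and then add the bounds.

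\textbf{First term.} By the definition of $\kappa_H$ as a per-step bound on the solution-leading trajectory, we have $G_H\le H\kappa_H$. The spine is traversed once. Each triggered state contributes one step, and the propagation work between consecutive triggers is absorbed into $\kappa_H$. If the trajectory contains untriggered MRV states, we read $\kappa_H$ as the cost of a whole segment between triggers. Under that reading, no extra factor appears.

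\textbf{Second term.} The hypothesis $\tau_h(\theta)\le\bar\tau_{\theta,H}$ gives $\sum_h\tau_h(\theta)\le H\bar\tau_{\theta,H}$. Combined with the first term, this yields the contribution $H(\kappa_H+\bar\tau_{\theta,H})$.

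\textbf{Third term.} This is the main step. We use $1-p_{s_h}(\pi_\theta)\le 1-\rho_{\theta,H}$ together with $T(s_{h,w})\le c\,b^{H-h}$. This gives
\[
\sum_{h=1}^{H}\bigl(1-p_{s_h}(\pi_\theta)\bigr)T(s_{h,w}) \le c(1-\rho_{\theta,H})\sum_{h=1}^{H} b^{H-h}.
\]
We then sum the geometric series:
\[
\sum_{h=1}^{H} b^{H-h}=\frac{b^H-1}{b-1}\le \frac{b^H}{b-1}.
\]
Since $c$ and $b>1$ are fixed constants, the third term is at most $\frac{c}{b-1}(1-\rho_{\theta,H})\,b^H = O\bigl((1-\rho_{\theta,H})b^H\bigr)$.

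\textbf{Combining.} Adding the three bounds proves the claim. The hidden constant is $\max\{1,\,c/(b-1)\}$, and it is independent of $H$, $\theta$, and $\rho_{\theta,H}$.

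The main obstacle is not the algebra but making sure the conditional decomposition applies as stated. The subtlety is that $p_{s_h}$ and $T(s_{h,w})$ must be deterministic given $\mathcal{I}$, which is the hypothesis of the last clause of Theorem~\ref{thm:global_cost}. Otherwise we would need the uniform bounds to hold almost surely, so that $\mathbb{E}[(1-p_{s_h})T(s_{h,w})]\le (1-\rho_{\theta,H})\,c\,b^{H-h}$ still follows by monotonicity. We would state this explicitly. We would also remark that the uniform lower bound $\rho_{\theta,H}$ pulls out of the sum only because each summand is nonnegative.
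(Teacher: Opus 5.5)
Your proposal is correct and follows the paper's proof essentially step for step: you start from the conditional form of Theorem~\ref{thm:global_cost}, bound $G_H\le H\kappa_H$ and $\sum_h\tau_h(\theta)\le H\bar\tau_{\theta,H}$, and control the failing-subtree term with $1-p_{s_h}(\pi_\theta)\le 1-\rho_{\theta,H}$ and the geometric sum $\sum_{h=1}^{H}b^{H-h}=(b^H-1)/(b-1)$. Your additional remarks go slightly beyond what the paper states, and they are sound. These are the explicit constant $\max\{1,c/(b-1)\}$, the reading of $\kappa_H$ as a per-segment cost, and the observation that the determinism-given-$\mathcal{I}$ clause, or almost-sure bounds, are what justify the conditional decomposition.
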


\begin{proof}
From Theorem~\ref{thm:global_cost}, conditioning on the instance gives
\begin{equation}
\mathbb{E}[C_{\pi_\theta}\mid \mathcal{I}]
=
G_H
+
\sum_{h=1}^{H}\tau_h(\theta)
+
\sum_{h=1}^{H}
\bigl(1-p_{s_h}(\pi_\theta)\bigr)T(s_{h,w}).
\end{equation}
By assumption, the solution-path cost satisfies
\begin{equation}
G_H \le H\kappa_H,
\end{equation}
and the total model-inference overhead satisfies
\begin{equation}
\sum_{h=1}^{H}\tau_h(\theta)
\le
H\bar{\tau}_{\theta,H}.
\end{equation}
For the failing-subtree term, using $p_{s_h}(\pi_\theta)\ge \rho_{\theta,H}$ and $T(s_{h,w})\le c b^{H-h}$, we have
\begin{equation}
\sum_{h=1}^{H}
\bigl(1-p_{s_h}(\pi_\theta)\bigr)T(s_{h,w})
\le
(1-\rho_{\theta,H})c
\sum_{h=1}^{H}b^{H-h}.
\end{equation}
The geometric sum can be bounded as
\begin{equation}
\sum_{h=1}^{H}b^{H-h}
=
\sum_{j=0}^{H-1}b^j
=
\frac{b^H-1}{b-1}
=
O(b^H).
\end{equation}
Combining the above bounds yields
\begin{equation}
\mathbb{E}[C_{\pi_\theta}\mid \mathcal{I}]
=
O\!\Big(
H(\kappa_H+\bar{\tau}_{\theta,H})
+
(1-\rho_{\theta,H})b^H
\Big).
\end{equation}
This proves the theorem.
\end{proof}

The bound reveals two qualitative regimes. If $1-\rho_{\theta,H}=\Theta(1)$, runtime remains exponential. If $1-\rho_{\theta,H}=O(b^{-\lambda H})$ with $0<\lambda<1$, the effective exponent reduces to $(1-\lambda)H$. If $1-\rho_{\theta,H}=O(b^{-H})$, the exponential term is fully amortized and runtime reduces to $O(H(\kappa_H+\bar{\tau}_{\theta,H}))$. This shows precisely how model capability affects search complexity: the relevant quantity is not whether the diffusion model can generate a complete Sudoku solution by itself, but whether its branch-ordering error $1-\rho_{\theta,H}$ becomes small enough at high-leverage binary states to offset the exponential growth of failing-subtree cost.

\section{Search-Behavior Visualization}

\begin{figure}[htbp]
\centering
\includegraphics[width=0.98\linewidth]{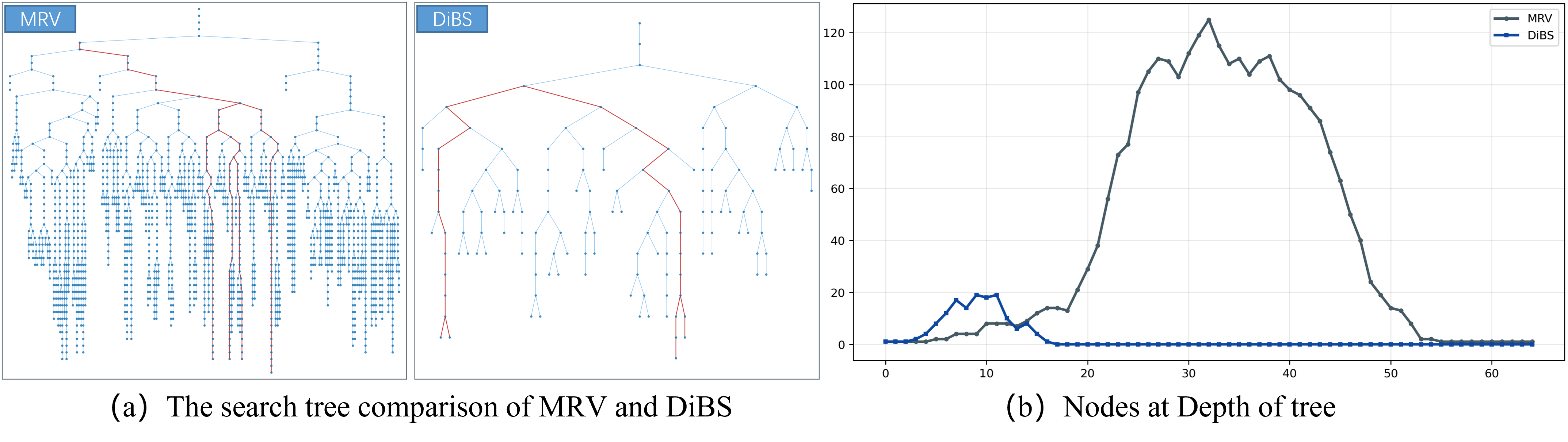}
\caption{Search-behavior comparison between MRV and DiBS on a representative puzzle. (a) Tree-structure visualization of the explored search space. Blue edges denote regular expansions and red edges highlight deep erroneous branches, which are the main source of wasted search. (b) Depth profile of expanded nodes. Compared with MRV, DiBS avoids broad frontier growth and reduces the number of explored states, especially along deep erroneous branches.}
\label{fig:search_behavior}
\end{figure}

\section{Complete Solver with DiBS}

Algorithm~1 presents the complete solver equipped with DiBS. The overall backbone remains standard depth-first search with sound constraint propagation and MRV branching. DiBS only changes the ordering of candidate values, and is activated only when the selected MRV variable has a binary domain.

\begin{algorithm}[h]
\caption{Complete CP+MRV solver with DiBS branch ordering}
\label{alg:dibs_solver}
\begin{algorithmic}[1]
\Function{Solve}{$x,\{D_v\}_{v\in V}$}
\State Apply sound constraint propagation to $(x,\{D_v\}_{v\in V})$
\If{$\exists v\in V$ with $D_v=\varnothing$}
\State \Return UNSAT
\EndIf
\If{$x$ is complete}
\State \Return $x$
\EndIf
\State Choose $v^\star \in \arg\min_{v:\,x(v)=0}|D_v|$
\State $L \gets$ a default ordering of $D_{v^\star}$
\If{$|D_{v^\star}| = 2$}
\State Construct the conditioning grid $C_e(s)$ from the current partial state
\State Query the diffusion model once on $C_e(s)$
\ForAll{$d \in D_{v^\star}$}
\State Compute the diffusion preference score
\State Compute the peer-consistency term
\State Compute the final branch score
\EndFor
\State Reorder $L$ by decreasing final branch score
\EndIf
\ForAll{$d \in L$}
\State Create a child state by assigning $x(v^\star)\gets d$
\State $r \gets$ \Call{Solve}{child state}
\If{$r \neq \mathrm{UNSAT}$}
\State \Return $r$
\EndIf
\EndFor
\State \Return UNSAT
\EndFunction
\end{algorithmic}
\end{algorithm}

\end{document}